\pgfplotsset{
    compat=1.16,
    /pgf/declare function={
        binom(\n,\k) = int(round(round(\n)! / (round(\n - \k)! * round(\k)!)));
    },
}
\DeclareMathOperator{\sign}{sgn}
\DeclareMathOperator{\dist}{dist}
\newcommand\latin[1]{\textit{#1}}
\let\savedS=\S
\newcommand{\N}{\mathbb{N}}
\renewcommand{\P}{\mathbb{P}}
\newcommand{\Z}{\mathbb{Z}}
\newcommand{\A}{\mathcal{A}}
\newcommand{\K}{\mathcal{K}}
\renewcommand{\L}{\mathcal{L}}
\renewcommand{\S}{\mathcal{S}}
\newcommand{\X}{\mathcal{X}}
\newcommand{\Y}{\mathcal{Y}}
\newcommand{\fa}{\forall}
\newcommand{\ceq}{\coloneqq} 
\DeclarePairedDelimiterX{\inp}[2]{\langle}{\rangle}{#1, #2} 
\newcommand{\bpar}[1]{\left(#1\right)}
\newcommand{\no}{\noindent}
    \DeclareMathAlphabet{\mathcalligra}{T1}{calligra}{m}{n}
    \DeclareFontShape{T1}{calligra}{m}{n}{<->s*[2.2]callig15}{}
\font\bigbold=cmbx12
\font\midheader=cmssbx11 
\font\smallheader=cmssbx10 
\def\maketitle#1#2#3#4#5{
  \centerline {\bigbold #1}
  \medskip
  \centerline {\eightpt #2} 
  \medskip
  \centerline {\tensc #3}
  \centerline {\tensc #4}
  \medskip
  \centerline {\sl #5}
  \bigskip
}
\newcounter{daggerfootnote}
\titleformat{\section}{\midheader}{\thesection.}{0.5em}{}
\titlespacing{\section}{0em}{1.5\bigskipamount}{\medskipamount}
\titleformat{\subsection}{\smallheader}{\thesubsection.}{0.5em}{}
\titlespacing{\subsection}{0em}{\bigskipamount}{\medskipamount}
\titleformat{\paragraph}[runin]{\bf}{}{0em}{}[.]
\titlespacing{\paragraph}{0em}{\medskipamount}{*0.75}
\font\tensc=cmcsc10
\font\eightpt=cmr8
\setlist[enumerate]{itemsep=\smallskipamount,parsep=0pt}
\setlist[itemize]{itemsep=\smallskipamount,parsep=0pt}
\renewcommand{\theequation}{\oldstylenums{\arabic{equation}}} 
\newtheoremstyle{nicertheorem}
{3pt}
{3pt}
{\sl}
{}
{\bf}
{.}
{.5em}
{}
\theoremstyle{nicertheorem}
\newtheorem{thm}{Theorem}
\newtheorem{cor}[thm]{Corollary}
\theoremstyle{definition}
\theoremstyle{remark}
\renewcommand{\mathbb}{\mathbf}
\renewcommand{\P}{\mathcal{P}}
\newcommand{\res}{\mathrm{res}} 
\DeclarePairedDelimiter\floor{\lfloor}{\rfloor}
\renewcommand{\ceq}{\coloneqq}
\begin{document}

\maketitle{A Study of Policy Gradient on a Class of Exactly Solvable Models}{}
{Gavin McCracken\footnote[1]{\href{mailto:gavin.mccracken@mail.mcgill.ca}{gavin.mccracken@mail.mcgill.ca}}\textsuperscript{\dag,\ddag}, Colin Daniels, Rosie Zhao\textsuperscript{\dag}, Anna Brandenberger\textsuperscript{\dag},
}
{Prakash Panangaden\textsuperscript{\dag,\ddag} and Doina Precup\textsuperscript{\dag,\ddag,\savedS}}
{\textsuperscript{\dag}School of Computer Science, McGill University, 
\textsuperscript{\ddag}MILA, 
\textsuperscript{\savedS}DeepMind
} 

\medskip

\[
  \vbox{
    \hsize 5.5 true in
    \noindent{\bf Abstract.}\enskip Policy gradient methods are extensively used in reinforcement learning as a way to optimize expected return. In this paper, we explore the evolution of the policy parameters, for a special class of exactly solvable \textsc{pomdp}s, as a continuous-state Markov chain, whose transition probabilities are determined by the gradient of the distribution of the policy's value. Our approach relies heavily on random walk theory, specifically on affine Weyl groups. We construct a class of novel partially observable environments with controllable exploration difficulty, in which the value distribution, and hence the policy parameter evolution, can be derived analytically. Using these environments, we analyze the probabilistic convergence of policy gradient to different local maxima of the value function. To our knowledge, this is the first approach developed to analytically compute the landscape of policy gradient in \textsc{pomdp}s for a class of such environments, leading to interesting insights into the difficulty of this problem. 
    \smallskip 
    
    \no \textbf{Keywords.}\enskip Policy gradient, \textsc{pomdp}, reinforcement learning, explicit value distribution.
  }
\]

\smallskip

\section{Introduction}
Policy gradient (PG) is a fundamental tool in reinforcement learning (RL) based
on changing the parameters of a policy to improve and ultimately maximize the
expected long-term
return~\cite{suttonPolicyGradientMethods2000,williamsSimpleStatisticalGradientfollowing1992,suttonReinforcementLearningIntroduction2018}. This
approach has led to several important
algorithms~\cite{lillicrapContinuousControlDeep2019,
  schulmanTrustRegionPolicy2017,espeholtIMPALAScalableDistributed2018}, and has
recently been used to develop frameworks in the continuous control
setting~\cite{barth-maronDistributedDistributionalDeterministic2018,
  tesslerDistributionalPolicyOptimization2019}. Theoretical analyses of PG
mainly rely on general properties of gradient
descent~\cite{guInterpolatedPolicyGradient2017,
  zhaoAnalysisImprovementPolicy2011, fazelGlobalConvergencePolicy2019}, and are
often complemented by empirical results relying on simulations. However,
empirical analyses can suffer from high variance, leading to impossibly large
computational costs in order to fully understand a non-trivial environment.

In our work we develop a new idea: the use of \emph{exact analytical solutions}
for the value distribution to obtain the gradient.  Clearly, one cannot obtain
such exact solutions for every model: we are not proposing a new ``method'' for
solving \textsc{pomdp}s or \textsc{mdp}s.  Rather, we are analyzing some special examples from
which one can extract more information because we can obtain exact analytical
forms by exploiting symmetry.

The importance of exactly solved models in physics, mathematical biology and
other areas of science has long been understood; see, for example the classic
book by Baxter~\cite{Baxter82} which has led to insights into phase transitions
not available by running simulations.

The importance of an exact solution stems from the fact that the exact
functional form gives insight that mere plots constructed from simulations can
only hint at.  In particular, one can learn the \emph{real} asymptotic form of
the behaviour and not guess from an unjustified extrapolation of the graphs.
Another important point is that one may be able to calculate quantities of
interest more efficiently; though this is not always guaranteed.  Thirdly, an
exact solution gives a much better handle on the accuracy of approximations
rather than the use of generic inequalities.  Finally exact solutions,
especially of nonlinear problems, show features that are totally hidden in
approximations.  Examples abound in geometry, differential equations and
physics.

The main critique of the use of exact solutions is that they can only be
obtained in situations that are too simple to be realistic.  Of course, exact
solutions are rare and difficult to obtain, but the effort invested in finding
them is well worth it.  As a counter to the ``too simple'' objection, we claim
that exact solutions usually contain size parameters that give a much better
handle on how problems really scale than numerical explorations can possibly
give.  In this paper the family of \textsc{pomdp}s, while, of course, not general is
still wide enough to be of interest.  There is no reason to think that the
qualitative features of the solutions obtained are special to these
environments.  Indeed, simulation studies are just as much tied to the specific
example being studied and the use of ``randomly seeded environments'' is
hardly a guarantee that the results are of generic significance.


In the present work we construct a family of partially observable environments
with controllable exploration difficulty, that were constructed from the
geometry of Weyl chambers: lattice structures that arise in the representation
theory of Lie algebras.  Their provenance from a branch of mathematics devoted
to symmetry is the key to the exact solvability of the problems in which we are
interested.  We adapt some known lattice-path counting techniques and apply them
to these structures.  We use the results to obtain exact expressions for the
value distribution function and explicit analytic formulas for the policy
parameter evolution.  Using these environments, we show that our approach allows
us to analyze the probabilistic convergence of policy gradient to different
local maxima of the value function.  To our knowledge, this is the first
approach developed to analytically compute the landscape of policy gradient in a
large class of \textsc{pomdp}s, leading to interesting insights into the difficulty of
this problem.  Indeed, we demonstrate how this framework can be used for calculating properties which are highly relevant, yet impossible to obtain from empirical approaches, such as the probability that an agent learns the optimal policy as a function of the learning rate (see Fig.~\ref{fig:1d-convergence-vs-step-size}), as well as the distributional evolution of the parameters through time (see Fig.~\ref{fig:distributional-evolution}).

Two notable features of our approach are: (i) the use of a higher-order
viewpoint on the problem and (ii) the explicit use of a distributional RL
approach.  The importance of (i) is that we view the RL algorithm itself as a
discrete-time Markov process on the continuous state space of value distribution
functions.  Thus the rich theory of Markov processes, specifically convergence
theory, can be brought to bear on the RL problem.  The point of (ii) is that we
can glean insights into distributional RL.



\begin{figure*}[htb]
    \hspace*{-.27cm}
    \centering
    \input{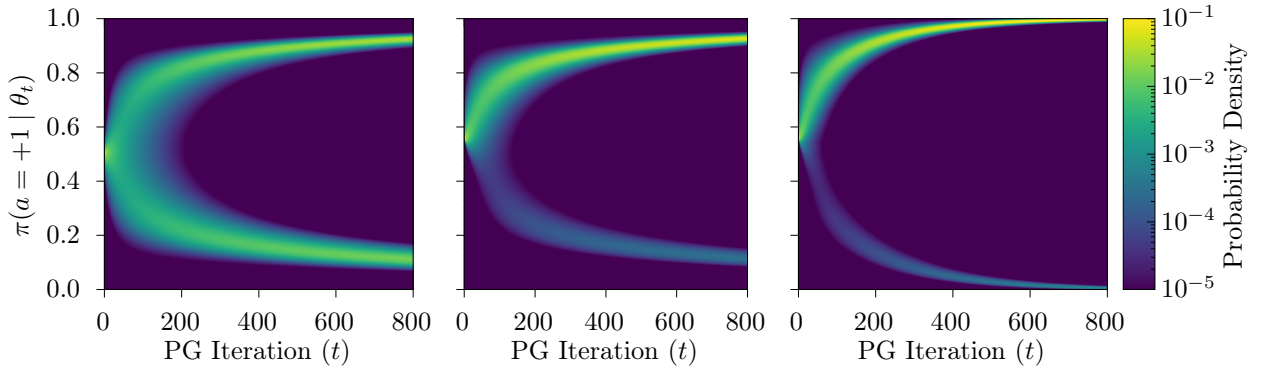}
    \vskip -\baselineskip
    \caption{
        By obtaining the exact analytical form of the value distribution, we can gain significant insight into the dynamics of learning. Above shows the distributional evolution of a single-parameter policy as it learns via policy gradient on a 10-state 1D gambler's ruin environment (see Fig.~\ref{fig:gamblers}). Specifically we see the consequence of starting with a policy at $t=0$ that favors taking a positive step $\pi(a = +1 \mid \theta_0) =  0.56$ (middle) vs. a purely random policy (left), as well as the effect of changing the optimizer to SGD with momentum, $momentum = 0.2$ (right) on the \textit{distribution} of learned policies at iteration t.
        For SGD with momentum, note that the probability transitions become  $\mathcal{P}_{\{v,\theta\},\{v',\theta'\}}$--- twice the dimensionality of \eqref{pg:transitions}. For numerical calculation details and specifics see Appendix~\ref{app:distributional-evolution}.
    }
    \vskip -\baselineskip
    \label{fig:distributional-evolution}
\end{figure*}

\section{Preliminaries}
The standard RL setting is an agent interacting with an environment, which can be modelled as either a Markov decision process (\textsc{mdp}) or, when the agent does not have access to complete state information, a partially observable Markov decision process (\textsc{pomdp}). A \textsc{mdp} is a 5-tuple: $(\S, \mathcal{A}, \mathcal{P}, \mathcal{R}, \gamma)$, where $S$ and $\mathcal{A}$ denote the state and action spaces assumed finite in this paper, $\mathcal{P}: \S \times \mathcal{A} \to \dist(\S)$ is the transition function, with $\P(s,a)(s')$ the probability of transitioning from $s\in \S$ to $s'\in \S$ given action $a \in \mathcal{A}$, $\mathcal{R}: S \times A \to [0,1]$ is the reward function, which we consider a random variable, and $\gamma \in [0,1]$ is the discount factor. \textsc{pomdp}s require two more components: $\omega$, the set of observations, and the observation probability distribution, $\mathcal{O}:\S \to \dist(\omega)$. Policies $\pi_{\vec{\theta}}$ map observations to distributions over actions: $\pi_{\vec{\theta}}: \mathcal{O} \to \mathcal{A}$, where $\vec{\theta}$ is the parameter vector. 

We will follow the distributional RL literature and consider the return random variable $Z^\pi: \S \times \A \to \mathrm{dist}(\mathcal{R})$, known as the value distribution~\cite{bellemareDistributionalPerspectiveReinforcement2017} or return distribution function~\cite{rowlandAnalysisCategoricalDistributional2018}, which is the sum of discounted rewards along the agent’s trajectory  when following the policy $\pi$. It satisfies the distributional version of Bellman's equations:
\begin{equation*}
Z^\pi(s, a) \stackrel{D}{=} R(s,a) + \gamma Z^\pi(s_{t+1}, s_{t+1}) = \sum_{t=0}^\infty \gamma^t R(s_t, a_t),
\end{equation*}
where $s_0 = s, a_0 = a$, the next state  $s_{t+1} \sim \P(s_t, a_t)$ and the trajectory is on-policy: $a_t \sim \pi(s_t) \; \fa t$.

The PG theorem quantifies how the return received by following a policy is changed as a result of changing the policies parameters, and gives the following update rule:
\begin{equation}
        \vec{\theta}_{t + 1} = \vec{\theta}_t + \alpha \nabla J(\vec{\theta}_t),
\end{equation}
\latin{s.t.}, under the \textsc{reinforce} update rule~\cite{williamsSimpleStatisticalGradientfollowing1992}, the gradient of policy value $J(\vec{\theta})$ is derived as~\cite{suttonPolicyGradientMethods2000, suttonReinforcementLearningIntroduction2018}
\begin{equation}
    \label{eq:grad_j}
    \begin{split}
          \nabla_{\vec{\theta}} J =\mathbb{E}_{\pi}\left[G_{t} \frac{\nabla \pi\left(A_{t} | S_{t}, \vec{\theta}\right)}{\pi\left(A_{t} | S_{t}, \vec{\theta}\right)}\right] 
          = \frac{1}{N} \sum_{i=1}^N \left(\sum_{t=1}^T \nabla_{\vec{\theta}} \log \pi(a_{i,t}|s_{i,t}, {\vec{\theta}}) \right) G,
    \end{split}
\end{equation}
where $G$ is the episodic return, $N$ is the batch-size, $\alpha$ is the learning rate and $T$ is the episode length.

\section{Analyzing policy gradient using Markov chain theory}
The main idea we propose is based on the observation that during policy gradient the parameter vector can be seen as evolving according to a Markov chain, where the next parameter vector considered depends only on the current one. Furthermore, by considering $\vec{\theta}_t$ a random variable we can write a distributional update for PG. Putting together the value distribution with the PG update:
\begin{gather}
    \vec{\theta}_{t + 1} \stackrel{D}{=} \vec{\theta}_t + \alpha \nabla J_t(\vec{\theta}_t), \\
    \nabla J_t(\vec{\theta}) \stackrel{D}{=} Z_t^{\pi}(S_0) \frac{\nabla \pi\big(A_{\tau} | S_{\tau}, \vec{\theta}\big)}{\pi\big(A_{\tau} | S_{\tau}, \vec{\theta}\big)} 
\end{gather} 
where $S_0\! = s_0$, $A_{\tau} \sim \pi_\theta(S_\tau)$, $S_{\tau + 1} \sim \P(S_\tau, A_\tau)$
and $Z_t^\pi(S_0) \stackrel{D}{=} Z^\pi(S_0, \pi_{\vec{\theta}_t}(S_0)) \stackrel{D}{=} \sum_{k=0}^T \gamma^k R(S_{k}, A_{k})$, with $T$ being the length of a trajectory. 

Using the resulting distribution of the policy gradients, we could 
then calculate and study PG by using the transition probabilities $\mathcal{P}_{\vec{\theta}, \vec{\theta}'}$, from some set of parameters $\vec{\theta}$ to another set $\vec{\theta}'$:
\begin{equation}
    \label{pg:transitions}
    \mathcal{P}_{\vec{\theta}, \vec{\theta}'} = \Pr{\alpha \nabla J(\vec{\theta}) \!= \vec{\theta}' - \vec{\theta}}.
\end{equation}
\-\ \ \ \ This is not the approach typically used in analyzing PG, which is often done using limit behavior (see also Sec.~\ref{sec:related}), but if we could implement this idea we could utilize tools from Markov Chain theory to obtain more extensive insight on PG. For example, we could study the distributional evolution of policies found by an algorithm as a function of hyperparameters like the learning rate, or choice of optimizer (see Fig.~\ref{fig:distributional-evolution}). We could also study in principle how the parameter vector should be structured as a function of specific properties of the environment. Tools from the probabilistic analysis of algorithms literature can also be used to understand the solution landscape of policy gradient and the likelihoods of converging to different maxima~\cite{devroyeProbabilisticTheoryPattern1996, levinMarkovChainsMixing2008}. Some of these ideas are proving quite useful in analyzing deep learning algorithms~\cite{saxeExactSolutionsNonlinear2014, linWhyDoesDeep2017}.

There are certain obstacles to this type of analysis which need to be overcome: it depends on our ability to compute the value distribution for a specific environment, and our ability to instantiate the gradient computation, which is a function of the PG algorithm. With a computationally tractable value distribution, the latter is easier, but is not trivial (we give an example for the popular \textsc{reinforce} algorithm). Computing the value distribution is particularly difficult because it requires considering all possible trajectories and computing their probabilities. We thus look into designing environments with this kind of counting in mind. 

Environment design is already done in many RL scenarios. For example, most deep RL works use ``intuitively'' difficult environments (\latin{i.e.}, arcade environments~\cite{bellemareArcadeLearningEnvironment2013} or Mujoco physics simulations~\cite{todorovMuJoCoPhysicsEngine2012}). Some recent efforts also look at environments designed with the goal of emphasizing a problem in RL (\latin{e.g.}, exploration) and produce families of environments where this aspect can be varied systematically~\cite{osbandBehaviourSuiteReinforcement2020}.

Our goal is to design a parametric family of environments where, at an intuitive level the structure is clear, but where some features of interest can be controlled by varying the parameters defining the members of the family.  Thus we can control the difficulty and complexity and study in explicit fashion the properties of policy gradient as these parameters are varied.  Since we have exact analytical solutions we do not need to collect data from agents interacting with the environment, which removes the noise of empirically simulating to determine an algorithms performance.



\section{The Gambler's ruin POMDP and its explicit gradient distribution}
\label{sec:gamblers}

We will now use the distributional approach outlined above to analytically study the evolution of parameters under the \textsc{reinforce} algorithm in a \textsc{pomdp}. To do this, we must design a problem in which the distribution of interest can be derived via combinatorics.

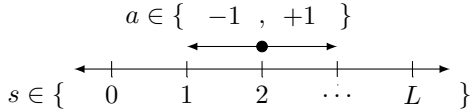
\begin{figure}[hbtp]
    \centering
    \begin{tikzpicture}
\draw[latex-latex] (-1.5, 0) -- (3.5, 0) ;
\foreach \x in {-1, 0, 1, 2, 3}
    \draw[shift={(\x, 0)}, color=black] (0pt, 3pt) -- (0pt, -3pt);
    
\node[align=left] at (-2, -0.35) {$s \in \{$};
\node[align=left] at (3.7, -0.35) {$\}$};
\foreach \x in {0, 1, 2} 
    \draw[shift={(\x-1, 0)}, color=black] (0pt, -3pt) -- (0pt, -3pt) node[below] {$\x$};
\draw[shift={( 2, 0)}, color=black] (0pt, -6pt) -- (0pt, -6pt) node[below] {$\ldots$};
\draw[shift={( 3, 0)}, color=black] (0pt, -3pt) -- (0pt, -3pt) node[below] {$L$};

\filldraw (1, 0.3) circle (2pt);
\draw[latex-latex] (0, 0.3) -- (2, 0.3);
\node[anchor=east,align=right] at (0.1, 0.7) {$a \in \{$};
\node at (0.5, 0.7) {$-1$};
\node at (1, 0.6) {$,$};
\node at (1.5, 0.7) {$+1$};
\node[anchor=west] at (1.9, 0.7) {$\}$};
\end{tikzpicture}
    \caption{Depiction of the 1D gambler's ruin \textsc{pomdp}.}
    \label{fig:gamblers}
    \vskip -\baselineskip
\end{figure}

We define an \textsc{mdp} analogous to the classical gambler's ruin Markov chain, in which the difficulty of exploration can be tuned by changing the starting state, reward function, or number of states in the chain. We then compute an exact solution for the value distribution as a function of these environmental attributes. The gambler's ruin \textsc{pomdp} (see Fig.~\ref{fig:gamblers}) has one starting state $s_0$, a state space $\S = \{0,..,L\}$, two terminating (absorbing) states $\S_{\text{terminating}} = \{0, L\}$, two actions $a \in \{-1, +1\}$, and the reward for every action is $-1$ unless it causes a transition to a terminating state $s$, in which case it gets a reward of $\lambda_{s}$. Finally, the agent is completely ignorant about state information -- it does not know what $s_0$ is or what state it is in.

\subsection{Derivation of the value distribution}
We begin calculating the value distribution for any arbitrary differentiable policy $\pi_{\vec{\theta}}$. Note that in this \textsc{mdp}, we only have one starting state $s_0$, so the return distribution $Z^\pi(s_0) \coloneqq Z^\pi(s_0, \pi(s_0))$ is in fact the episodic return distribution $G$ for an on-policy trajectory: 
$$\Pr{G = g} = \Pr{G = g \mid s_0}= \Pr{Z^\pi(s_0) = g},$$ 
where $\pi(\cdot) = \pi_{\vec{\theta}}(\cdot)$. 
Then, to enhance readability, let 
$$C_{g} = \{(s,t) : s \in \S_\text{terminating}, \ g = \lambda_{s} - t \}$$ 
be the set of $(s,t)$ pairs where $s$ is a terminating state that can give a return of $g$ and $t$ is the associated time-step at which the state $s$ is reached and the return $g$ is obtained. Note that since this \textsc{mdp} only has two terminating states, for each return $g$, $C_{g}$ can contain at most two elements. Now, the PDF can be split as
\begin{equation}\label{gamblers:PDF}
    \Pr{G = g} = \sum_{\mathclap{(s,t) \in C_{g}}} \Pr{G = (\lambda_{s} - t)}.
\end{equation}

\begin{thm}
\label{gamblers:return}
Consider the 1D gambler's ruin \textsc{pomdp} described above, illustrated in Fig.~\ref{fig:gamblers}. Consider a policy $\pi_{\vec{\theta}}$. The episodic return distribution $G$ for an on-policy trajectory satisfies
\begin{equation}
    \Pr{G = g} \!=\! \sum_{\mathclap{(s,t) \in C_{g}}} (1 - p)^{(t - r_{s,t})} p^{r_{s,t}} a_{s_0,L}(t, r_{s, t}),
\end{equation}
where $a_{s_0,L}(t, r_{s, t})$ is the number of paths that end on state $s_t$ after taking $r_{s,t}$ steps to the right: 
\begin{align}
\label{gamblers:count}
    a_{s_0,L}(t,r_{s,t}) &= \sum_{\mathclap{i=-\infty}}^\infty
      \binom{t'}{r' + iL} 
    - \binom{t'}{(r' + s_0) + iL}, 
    \text{ s.t. } t' =  t - 1, \ r' =
    \begin{cases}
        r_{s,t} & \text{if } s_t = 0 \\ 
        r_{s,t} - 1 & \text{if }s_t = L 
    \end{cases} \notag \\ 
    &= \! \frac{4}{L} \sum_{\mathclap{k=0}}^{\floor{(L-1)/2}} 2^{t'}\cos^{t'}\bpar{\frac{\pi k}{L}} \sin(\frac{\pi k}{L}s_0)\sin (\frac{\pi k}{L} (2r_{s,t}' + s_0 - t')).
\end{align}

\end{thm}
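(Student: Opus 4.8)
The plan is to reduce the return distribution to a first-passage problem for a biased random walk, count the relevant paths by the reflection principle, and then convert the resulting lattice sum into its spectral (Fourier) form.

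First I would exploit the agent's state-blindness. Since it receives no state information, the policy $\pi_{\vec\theta}$ is a single fixed distribution over $\{-1,+1\}$; writing $p=\pi(a=+1\mid\vec\theta)$, an on-policy trajectory is exactly a biased simple random walk on $\S=\{0,\dots,L\}$ started at $s_0$ with right-probability $p$, absorbed at $0$ and $L$. Conditioned on absorbing at terminal state $s$ after $t$ steps the undiscounted return is the deterministic value $g=\lambda_s-t$ fixed by the reward convention, so by \eqref{gamblers:PDF} it suffices to compute, for each $(s,t)\in C_g$, the first-passage probability of hitting $\{0,L\}$ for the first time at step $t$ at the state $s$. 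The endpoint and length force the number of right-steps to equal $r_{s,t}=(t+s-s_0)/2$, so every trajectory realizing this event has the same probability $(1-p)^{t-r_{s,t}}p^{r_{s,t}}$; factoring this out leaves precisely the path count $a_{s_0,L}(t,r_{s,t})$ asserted by the theorem. It remains to count these paths.

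Next I would peel off the final step. A first-passage path hitting $0$ at time $t$ must sit at $1$ at time $t-1$ and step left, while one hitting $L$ must sit at $L-1$ and step right; in either case the first $t'=t-1$ steps form a path confined to the open strip $\{1,\dots,L-1\}$ running from $s_0$ to $b=s_0+2r'-t'$, with reduced right-step count $r'=r_{s,t}$ for $s=0$ and $r'=r_{s,t}-1$ for $s=L$, which explains the case split in \eqref{gamblers:count}. The number of such confined paths equals $(M^{t'})_{s_0,b}$, where $M$ is the adjacency matrix of the path graph on $\{1,\dots,L-1\}$. To evaluate it combinatorially I would apply the reflection principle for two barriers: the reflections $x\mapsto-x$ and $x\mapsto 2L-x$ generate the infinite dihedral (affine Weyl) group $\widetilde{A}_1$, whose orbit of $b$ is $\{b+2iL\}\cup\{-b+2iL\}$, and inclusion--exclusion over these images with alternating signs yields $(M^{t'})_{s_0,b}=\sum_i\bpar{\binom{t'}{r'+iL}-\binom{t'}{(r'+s_0)+iL}}$, the first equality. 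The bi-infinite sum is in fact finite because $\binom{t'}{m}=0$ unless $0\le m\le t'$.

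Finally I would read off the trigonometric form from the spectrum of $M$. This tridiagonal matrix is diagonalized by $v_k(j)=\sin(\pi kj/L)$ with eigenvalues $2\cos(\pi k/L)$ for $k=1,\dots,L-1$ and $\langle v_k,v_k\rangle=L/2$, so
\[
(M^{t'})_{s_0,b}=\frac{2}{L}\sum_{k=1}^{L-1}\bpar{2\cos\tfrac{\pi k}{L}}^{t'}\sin\tfrac{\pi k s_0}{L}\,\sin\tfrac{\pi k b}{L}.
\]
Substituting $b=2r'+s_0-t'$ and using the symmetry $k\leftrightarrow L-k$ -- under which the summand is invariant because $t'+s_0+b=2(s_0+r')$ is even, while the $k=0$ and (when $L$ is even) $k=L/2$ terms vanish -- folds the sum onto $0\le k\le\floor{(L-1)/2}$ and doubles it, producing the prefactor $4/L$ and the second equality. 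Equivalently, applying the finite Fourier transform on $\Z/2L\Z$ (Poisson summation) directly to the image sum gives the same series. The main obstacle I anticipate is the reflection-principle bookkeeping: correctly identifying the two mirrors and the sign carried by each image, and pinning down the off-by-one reductions $t'=t-1$ and the two cases for $r'$, so that the orbit indices land exactly on $r'+iL$ and $(r'+s_0)+iL$. The spectral step is then routine, the only delicate point being the parity check that makes the $k\leftrightarrow L-k$ folding exact.
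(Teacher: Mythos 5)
Your proposal is correct, and it matches the paper's argument for everything up to and including the first equality: the reduction via state-blindness to a biased walk with the probability factor $(1-p)^{t-r_{s,t}}p^{r_{s,t}}$, the parity computation of $r_{s,t}$, the peeling of the final step to get $t'=t-1$ and the case split for $r'$, and the method of images are all the same as in Appendix~A.1 (the paper phrases the images as iterated reflections off the two barriers rather than as the $\widetilde{A}_1$-orbit $\{b+2iL\}\cup\{-b+2iL\}$, but these are identical arguments; your orbit indices land on $(r'+s_0)+iL$ after the harmless re-indexing $\binom{t'}{t'-r'-s_0+iL}=\binom{t'}{(r'+s_0)-iL}$). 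Where you genuinely diverge is the second equality. The paper extracts the bi-infinite binomial sum as the coefficient of $z^{-1}$ in a Laurent series, sums the geometric series to get a rational function, and applies the residue theorem so that the count becomes minus the sum of residues at the nontrivial $L$-th roots of unity, which after the $k\leftrightarrow L-k$ pairing gives the $\tfrac{4}{L}\sum 2^{t'}\cos^{t'}\sin\sin$ form. You instead diagonalize the path-graph adjacency matrix $M$ on $\{1,\dots,L-1\}$ in the discrete sine basis, with eigenvalues $2\cos(\pi k/L)$ and normalization $L/2$, and fold using the parity of $t'+s_0+b=2(s_0+r')$. Both are standard and equivalent (your Poisson-summation remark is exactly the bridge between them), but they buy different things: your spectral route proves the trigonometric formula \emph{independently} of the image sum, since $(M^{t'})_{s_0,b}$ is a priori the confined-walk count, so the two displayed expressions in \eqref{gamblers:count} are each verified on their own terms rather than one being algebraically transformed into the other; it also avoids contour integration and makes the vanishing of the $k=0$ and (for $t'\geq 1$) $k=L/2$ terms transparent. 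The paper's residue route, by contrast, is a direct simplification of the infinite sum and the technique transfers to coefficient-extraction problems where no convenient self-adjoint operator is available. The only caveat in your argument, the degenerate case $t'=0$ with $L$ even (absorption at $t=1$ from $s_0\in\{1,L-1\}$, where the $k=L/2$ term does not vanish and the folded sum undercounts), is an edge-case artifact shared identically by the paper's own derivation, so it does not distinguish the two proofs.
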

\begin{proof}[Proof sketch] 
We compute the probability that a path takes $t$ steps and ends at $s_t$ by breaking it down into two parts: the probability that a policy took $r_{s,t}$ steps to the right and $(t - r_{s,t})$ steps to the left, and the total number of paths that end on $s_t$, after taking $r_{s,t}$ steps to the right, denoted $a_{s_0,L}(t, r_{s,t})$. We have
\begin{equation}
\label{gamblers:probability}
\Pr{G = \lambda_s - t} = (1 - p)^{(t - r_{s,t})} p^{r_{s,t}} a_{s_0,L}(t, r_{s, t}).
\end{equation}
Note that the state $s_t$ is only reachable if $s_t - s_0 + t = 0 \pmod 2$. We can then calculate the number of steps to the right as $ r_{s,t} = \sfrac{1}{2}\,(s - s_0 + t).$ The difficulty now lies in computing $a_{s_0,L}(t, r_{s, t})$ and we use what is called ``the method of images'' in the lattice path counting literature~\cite{Humphreyshistorysurveylattice2010}. The intuition is that it is easy to count the number of paths if there are no terminating barriers: it is just given by Pascal's triangle. We now imagine an image of the agent reflected in a mirror situated at the terminating state. This image faithfully carries out the reflected steps made by the original agent. When the agent hits the terminating state, so does its image, and we imagine that they cancel each other. In this way, the number of paths in the presence of terminating barriers can be expressed as the difference between two coefficients from Pascal's triangle. Of course, this picture is further complicated by the second terminating barrier and the concomitant presence of multiple images. The result is \eqref{gamblers:count} (see Appendix~\ref{app:gamblersmethodofimages} for the proof).
Substituting \eqref{gamblers:probability} into \eqref{gamblers:PDF} then gives the distribution of the return.
\end{proof}
%
\begin{figure*}[htb]
    \centering
    \input{figures/1d-value-grad-new}
    \begin{tikzpicture}
        \newcommand\Square[1]{+(-#1,-#1) rectangle +(#1,#1)}
        \def\nshift{0.291in}
        \draw[shift={(-0.52in, 0)}, color=white] (0, 0) -- (0, 0);
        \foreach \x/\c in {%
            8/3b4cc0,7/6889ee,6/9abaff,5/c9d8f0,
            4/edd1c2,3/f7a889,2/e26a53,1/b40426
        } {
            \definecolor{nodecolor}{HTML}{\c}
            \filldraw[shift={(\x*\nshift, 0)}, color=black, fill=nodecolor] \Square{3pt};
            \draw[shift={(\x*\nshift, 0)}, -latex] (3pt, 0) -- (\nshift-3pt, 0);
            \draw[shift={(\x*\nshift, 0)}, -latex] (-3pt, 0) -- (-\nshift+3pt, 0);
            \node[shift={(\x*\nshift, -0.4)}] {\x};
        };
        \fill[shift={(0, 0)}, color=black] \Square{4pt};
        \node[shift={(0, -0.4)}] {$s=\hphantom{s}$};
        \fill[shift={(9*\nshift, 0)}, color=black] \Square{4pt};
    \end{tikzpicture}
    \caption{Policy value function (Eq.~\ref{eq:1d-value-fn}, left), and expected gradient (right) for the 1D gambler's ruin environment ($L = 9$, $\lambda_0 = 0$, $\lambda_L = 9$) as a function of the probability that an agent takes a positive step $p = \pi(a = +1\mid\vec{\theta})$.
    Note that even with only ten states, this environment demonstrates surprising complexity. 
    For example, starting with a purely random policy, it is impossible to guarantee convergence to the global optimum if the starting state is $1$, $2$, or $3$, illustrating the well-known shortfall of local optimization in non-convex settings~\cite{fazelGlobalConvergencePolicy2019,bhandariGlobalOptimalityGuarantees2019}.
    }
    \vskip -0.75\baselineskip
    \label{fig:1d-value-grad}
\end{figure*}
\subsubsection{Value function}
\-\ \ \ \ It is worth noting that one can give the value function -- that is, the expectation of the value distribution -- explicitly in terms of Chebyshev polynomials of the second kind, $U_n(x)$, by using matrix inversion methods~\cite{encinasExplicitInverseTridiagonal2018} (see Appendix~\ref{app:gamblersValue}):
%
\begin{equation}
\label{eq:1d-value-fn}
v_{\pi,L,\lambda}(s) = 
- \sum_{i=0}^{s - 1} I_{i,s - 1,L}(1 - p, p) 
- \sum_{j=s}^{L - 2} I_{s - 1,j,L}(p, 1 - p) (1 - p\lambda \delta_{j,L-2}), 
\end{equation}
where 
\begin{equation*}
    I_{i,j,n}(\alpha, \beta) := 
\alpha^{j - i}
(\alpha\beta)^{\frac{i - j - 1}{2}}
\Big[U_{n - 1}\Big(\frac{1}{2\sqrt{\alpha\beta}}\Big)\Big]^{-1} 
U_i\Big(\frac{1}{2\sqrt{\alpha\beta}}\Big)
U_{n-j-2}\Big(\frac{1}{2\sqrt{\alpha\beta}}\Big).
\end{equation*}
%
\-\ \ \ \ Before looking at the distribution of the gradient explicitly, we can use the policy value function as a first approximation to give us a better intuition of what the 1D gambler's ruin environment is like. Fig.~\ref{fig:1d-value-grad} shows \eqref{eq:1d-value-fn} and its gradient, and we see that even for a relatively small number of states, the environment exhibits surprising features such as non-convexity.

\subsubsection{Gradient distribution}
\label{sec:gradient-distribution}
\-\ \ \ \ Because the policy cannot observe the state for this formulation of this environment, we can express the distribution of the gradient in a straightforward manner  depending on the number of left and right steps taken in a trajectory. The next result is a corollary of Theorem~\ref{gamblers:return}:

\begin{cor}
Let $N_+$ and $N_-$ be respectively the number of left and right steps taken in a trajectory. Then the gradient distribution for a policy $\theta$ is
\begin{equation}
\label{eq:gradient-distribution-1D}
    \nabla_\theta J = \left[\sum_t \nabla_\theta \log(\pi(a_t \mid \vec{\theta}))\right]G  = \Big[ N_- \nabla_{\vec{\theta}} \log(\pi(-1 \mid \vec{\theta})) + N_+ \nabla_{\theta} \log(\pi(+1 \mid \vec{\theta})) \Big]G, 
\end{equation}
where the return is $G = -(N_+ + N_-) + \lambda_{s} = -t + \lambda_{s}$.
\end{cor}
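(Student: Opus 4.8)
The plan is to reduce the single-trajectory \textsc{reinforce} gradient to a function of just the two summary statistics $N_+$ and $N_-$, exploiting the defining feature of this \textsc{pomdp}: the agent receives no state information, so its policy depends only on the action taken. I would start from the \textsc{reinforce} estimator in \eqref{eq:grad_j}, which for a single on-policy trajectory reads $\nabla_{\vec{\theta}} J = \big(\sum_t \nabla_{\vec{\theta}}\log \pi(a_t \mid s_t, \vec{\theta})\big)\,G$, and immediately invoke partial observability to drop the state argument, replacing every $\pi(a_t \mid s_t, \vec{\theta})$ by $\pi(a_t \mid \vec{\theta})$. This step is the entire conceptual content of the corollary and is why it is specific to the state-blind formulation of the environment.

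Next I would use that the action space is the two-element set $\{-1,+1\}$, so each summand $\nabla_{\vec{\theta}}\log\pi(a_t\mid\vec{\theta})$ takes only one of two values. Grouping the sum over time steps by the action value and counting occurrences collapses it: the left steps (action $-1$) each contribute $\nabla_{\vec{\theta}}\log\pi(-1\mid\vec{\theta})$ and the right steps (action $+1$) each contribute $\nabla_{\vec{\theta}}\log\pi(+1\mid\vec{\theta})$, giving the bracketed factor $N_- \nabla_{\vec{\theta}}\log\pi(-1\mid\vec{\theta}) + N_+ \nabla_{\vec{\theta}}\log\pi(+1\mid\vec{\theta})$ of \eqref{eq:gradient-distribution-1D}, where $N_-$ and $N_+$ count the left and right steps matching their respective log-policy terms. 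For the scalar return factor I would appeal directly to the reward structure together with the convention already fixed in Theorem~\ref{gamblers:return}: a trajectory of $t$ steps terminating at state $s$ accrues a per-step cost summing to $-t$ plus the terminal reward $\lambda_s$, so $G = \lambda_s - t$; since $t = N_+ + N_-$, this is exactly $G = -(N_+ + N_-) + \lambda_s$. Multiplying the grouped gradient by $G$ produces \eqref{eq:gradient-distribution-1D}.

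To justify calling this a corollary of Theorem~\ref{gamblers:return} rather than an independent computation, I would observe that $\nabla_{\vec{\theta}} J$ is a \emph{deterministic} function of the triple $(N_+, N_-, s)$, equivalently of the number of right steps $r_{s,t}=N_+$, the horizon $t$, and the terminal state $s$ appearing in the theorem. Consequently the distribution of the gradient is simply the push-forward, under this map, of the joint law of $(t, r_{s,t}, s)$ that the theorem supplies via its path count $a_{s_0,L}(t,r_{s,t})$. I do not anticipate a genuine obstacle: the only point needing care is recognizing that state-blindness is precisely what lets the per-timestep sum be rewritten purely in terms of action counts, and checking that the right-step count $N_+$ used here coincides with the $r_{s,t}$ of the counting argument so that the two results are consistent.
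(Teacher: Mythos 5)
Your proposal is correct and takes essentially the same route as the paper, which states this result without a separate proof precisely because the argument is the one you give: state-blindness lets the \textsc{reinforce} sum in \eqref{eq:grad_j} collapse by grouping the two actions into the counts $N_-$ and $N_+$, the reward structure gives $G = \lambda_s - t$ with $t = N_+ + N_-$, and the gradient's distribution is the push-forward of the joint law of $(t, r_{s,t}, s)$ supplied by Theorem~\ref{gamblers:return}, with $N_+ = r_{s,t}$. Your closing consistency check even silently corrects the statement's swapped prose (``$N_+$ and $N_-$ \ldots respectively \ldots left and right'') in favor of the pairing the displayed formula actually uses, which is the intended reading.
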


It is \eqref{eq:gradient-distribution-1D} that then allows us to model the evolution of a policy's parameters as a MC, and gain real insight into how learning happens. For example, by constructing a representation of the transition probabilities $\mathcal{P}_{\vec{\theta},\vec{\theta}'}$ for each step of PG, we can look at the probability that an agent will converge to the optimal policy after an infinite number of iterations. Furthermore, extending this analysis to include the effect of a hyperparameter is simple in the MC view (see Fig.~\ref{fig:1d-convergence-vs-step-size}), and would be impossible with a purely sample-based approach.

\begin{figure*}[htbp]
    \centering
    \input{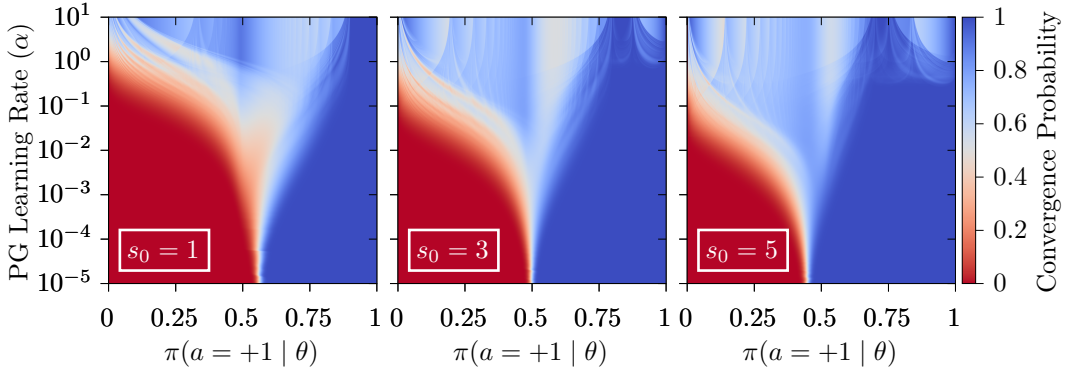}
    \caption{Probability that a Boltzmann PG agent will learn the optimal policy after an \textit{infinite} number of steps of SGA PG on a 10-state 1D gambler's ruin environment (same as in Fig.~\ref{fig:1d-value-grad}).
    Here we show the effect of hyperparameter (here, $\alpha$) tuning on the probability that an agent converges to the optimal policy, for three different starting states. 
    This type of analysis is exactly what would be impossible to calculate using empirical techniques, and is enabled through the use of this Markov chain representation and the exact value distribution.
    For numerical calculation details see Appendix~\ref{app:1d-convergence-vs-step-size}.
    }
    \label{fig:1d-convergence-vs-step-size}
    \vskip -0.75\baselineskip
\end{figure*}

\subsection{Extensions of the environment}

Our method is not only limited to just non-observable environments with state independent policies. We now show that this sort of analysis can be done for policies with state dependence, as well as arbitrarily complex \textsc{mdp}s --- specifically, we treat the case of having arbitrarily many actions.

\subsubsection{State dependence}
\label{sec:state-dependence}

\-\ \ \ \ In this section, we modify the gambler's ruin with observations: suppose the second leftmost state $s=1$ is a `flipped state', where an agent on this state will move left when it intends to go right and vice versa. Furthermore, the \textsc{pomdp} is modified such that the agent observes whether or not it is on the flipped state. Suppose a given trajectory of length $t$ is known to terminate on state $s_t \in \{0, L\}$ after $v_1$ visits to flipped state $s=1$. For a given policy $\vec{\theta}$, we now denote by $p_1$ and $p_2$ the probability of moving right respectively on the flipped state and on all other states.

For $1 \leq i \leq v_1$, let $n_i$ be the timestamp at which the flipped state is visited for the $i$th time. Suppose $s_t = 0$; we can count the number of paths by considering all possibilities of paths starting from $s_0$ and landing on the flipped state for the first time, followed by a path starting on the flipped state and returning to itself $v_1 - 1$ times with a permissible number of steps such that $n_{v_1} = t-1$, and the final transition leads to state $0$. The number of trajectories ending on the leftmost state, denoted $\tau_{s_0, L}(t, v_1, 0)$, letting $\Delta_i = n_i - n_{i-1}$, is
\begin{equation} \label{eq:1}
\begin{aligned}
    \tau_{s_0, L}(t, v_1, 0) &= \sum_{n_1 = s_0 -1}^{t-2v_1+1} a_{s_0, L}\left(n_1, \frac{n_1 - s_0 + 1}{2}\right) \Bigg[ \sum_{\substack{n_j, 2 \leq j \leq v_1 \\ n_j = n_{j-1} + 2}}^{t - 2(v_1 - n_{j-1})+1} \prod_{i=2}^{v_1} a_{1, L}\left(\Delta_i, \frac{\Delta_i}{2}\right)\Bigg].
\end{aligned}
\end{equation}
%
If $s_t = L$, the number of trajectories $\tau_{s_0, L}(t, v_1, L)$ can be calculated similarly:
\begin{align}
    \label{eq:2}
        \tau_{s_0, L} (t, v_1, L) &=\!\!\!\! \sum_{n_1 = s_0 -1}^{t-2(v_1-1)-(L-1)} a_{s_0, L}\left(n_1, \frac{n_1 - s_0 + 1}{2}\right) \notag \\
        &\quad \Bigg[ \sum_{\substack{n_j, 2 \leq j \leq v_1 \\ n_j = n_{j-1} + 2}}^{t - 2(v_1 - n_{j-1} - 1)-(L-1)} \left(\prod_{i=2}^{v_1} a_{1, L}\left(\Delta_i, \frac{\Delta_i}{2}\right)\right) a_{1, L}\left(t-n_{v_1}, \frac{L+t-n_{v_1}-1}{2}\right)\Bigg].
\end{align}
From \eqref{eq:1} and \eqref{eq:2}, summing over the possible number of visitations $v_1$ to the flipped state, we have
\begin{align}
    \Pr\left\{G = g \right\} \! &=\! \sum_{v_1 = 0}^\infty  \sum_{(s,t) \in C_{g}}  p_1^{\vphantom{1}} p_2^{r_{s,t}-v_1}(1-p_1)^{v_1} (1-p_2)^{t-r_{s,t}-1} \tau_{s_0, L}(t, v_1, s),
\end{align}
noting that there are $v_1$ instances where an action right with probability $p_1$ inadvertently caused an action left. With the number of left and right steps on the flipped state being distinguishable, we have two vectors: $\vec{N}_+ = {[N_{+,f}, N_{+,r}]}^{\intercal}$, $\vec{N}_- = {[N_{-,f}, N_{-,r}]}^{\intercal}$, recording the number of right and left steps taken respectively on the flipped state and on the regular states on a trajectory. With the parameter vector $\vec{\theta} = {[\theta_f, \theta_r]}^{\intercal}$, the distribution of the gradient is:
\[
    \nabla_{\vec{\theta}} J = \Big[ \vec{N}_- \nabla_{\vec{\theta}} \log(\pi(-1 | \vec{\theta})) + \vec{N}_+ \nabla_{\vec{\theta}} \log(\pi(+1 | \vec{\theta})) \Big]G 
\]
Although we only consider adding a single state observation, these calculations can be extended to keeping track of the number of visitations to multiple states of interest.

\section{Higher dimensional examples}
\label{sec:n-dimensional}
\no
\-\ \ \ \ Here, we derive the general solution for the gambler's ruin in an arbitrary number of dimensions, \textit{i.e.} an arbitrary number of actions, and even an arbitrary number of terminating states which each can give different rewards for being reached. To formalize our combinatorial approach, we look to the field of lattice path enumeration, and specifically that of random walks inside the alcoves of affine Weyl groups~\cite{GesselRandomwalkWeyl1992}. These are most simply described as particular $n$-dimensional regions in Euclidean space bounded by hyperplanes~\cite{KrattenthalerLatticePathEnumeration2017}.

\begin{figure}[hbtp]
    \centering
    \vskip -\baselineskip
\pgfplotstableread{%
1 0 2
2 0 1
2 0 2
2 0 3
1 0 3
1 0 4
}\table
\pgfplotstablegetrowsof{\table}
\pgfmathsetmacro{\rows}{\pgfplotsretval-2}
\begin{tikzpicture}[scale=0.85]
\definecolor{xred}{HTML}{b40426}
\definecolor{xblue}{HTML}{3b4cc0}
\pgfplotsset{
    every outer x axis line/.style={draw opacity=0},
    every outer y axis line/.style={draw opacity=0},
    every outer z axis line/.style={draw opacity=0},
    every x tick/.style={black!20,thin},
    every y tick/.style={black!20,thin},
    every z tick/.style={black!20,thin},
    major grid style={solid,black!20,thin},
}
\newcommand{\ternaryrow}[1]{%
\pgfplotstablegetelem{#1}{[index]0}\of\table
\let\x\pgfplotsretval
\pgfplotstablegetelem{#1}{[index]1}\of\table 
\let\y\pgfplotsretval
\pgfplotstablegetelem{#1}{[index]2}\of\table 
\let\z\pgfplotsretval%
}
\begin{ternaryaxis}[
    clip=false,
    ternary limits relative=false,
    major tick length=10,
    xmin=-1,xmax=8,
    ymin=-1,ymax=8,
    zmin=-1,zmax=8,
    xtick distance=1,
    ytick distance=1,
    ztick distance=1,
    xtickmin=1,xtickmax=6,
    ytickmin=1,ytickmax=6,
    ztickmin=1,ztickmax=6,
    xticklabels={,,},
    yticklabels={,,},
    zticklabels={,,}
]
    \foreach \k in {0,...,\rows}
    {
        \ternaryrow{\k}
        \let\X\x
        \let\Y\y
        \let\Z\z
        \pgfmathsetmacro{\K}{\k+1}
        \ternaryrow{\K}
        \addplot3[xblue,-latex,thick] coordinates {(\X,\Y,\Z) (\x,\y,\z)};
    }
    
    \fill[fill=xblue] (axis cs:1,3) circle (2pt);
    \node[anchor=north east] at (axis cs:1,3) {{\scriptsize $\mathbf{(3\; 1\; 0)}$}};
    \node[anchor=north] at (axis cs:0,2) {{\scriptsize $\;\mathbf{(6\; 2\; 2) \in \mathcal{H}^{3}_6}$}};
    

    \addplot3[xred,thick,dashed] coordinates {(1, 0, 4) (0, 0, 4)};
    \node at (axis cs:0,2) {\rotatebox{60}{\textcolor{xred}{$\boldsymbol{\times}$}}};
    \draw[latex-latex] (axis cs:-1,0) -- (axis cs:7,0);
    \draw[latex-latex] (axis cs:0,-1) -- (axis cs:0,7);
    \draw[latex-latex] (axis cs:-1,7) -- (axis cs:7,-1);
    \draw[-latex] (axis cs:6,3) -- (axis cs:6,2) node[anchor=south] {$\hat{x}$};
    \draw[-latex] (axis cs:6,3) -- (axis cs:7,3) node[anchor=north east] {$\hat{y}$};
    \draw[-latex] (axis cs:6,3) -- (axis cs:5,4) node[anchor=north west] {$\hat{z}$};
    \node[anchor=west] at (axis cs:-1,0) {$x = z + 6$};
    \node[anchor=west] at (axis cs:0,-1) {$y = z$};
    \node[anchor=west] at (axis cs:7,-1) {$x = y$};
\end{ternaryaxis}
\end{tikzpicture}
    \vskip -1.5\baselineskip
    \caption{A trajectory in the alcove $\mathcal{D}^3_6$, as defined by \eqref{eq:alcove}, that has been visualized by a projecting onto a triangular grid. Here the path starts at $(3, 1, 0)$ and terminates on the boundary, defined by $y = z$, at $(6, 2, 2)$.}
    \label{fig:alcove-walk}
\end{figure}

Formally, for dimension $n$ and some positive integer $m$ for the length of the terminating barriers, the environment's action space is the set of $n$ positive unit steps $\vec{e}_i$, where $\vec{e}_i$ is the $i$th standard unit vector, and its state space is the combination of the scaled alcove of the affine Weyl group of type $\tilde{A}_{n-1}$,
\begin{equation}
    \label{eq:alcove}
    \mathcal{D}^n_m\! =\! \{ (x_1, \ldots, x_n) : x_1\!>\!x_2\!>\!\cdots\!>\!x_n\!>\!x_1 - m \}
\end{equation}
and the hyperplanes that comprise its boundary,
\begin{equation}
    \label{eq:hyperplanes}
    \begin{split}
        \mathcal{H}_m^n = &\{(x_1, \ldots, x_n) : x_i - x_{i + 1} = 0, 1 \leq i < n \} \cup \{ (x_1, \ldots, x_n) : x_1 - x_n = m \}. \\
    \end{split}
\end{equation}
Thus for dimension $n$ and integer scaling factor $m$ the state and action spaces are
\begin{equation}
    \label{eq:nd-state-action-space}
    \S = \mathcal{D}^n_m \cup \mathcal{H}_m^n \qq{and} \mathcal{A} = \{\vec{e}_1,\vec{e}_2,\ldots,\vec{e}_n\}.
\end{equation}
Note that all states in $\mathcal{H}_m^n$ are terminating, analogous to the leftmost and rightmost states in 1D gambler's ruin. \latin{E.g.}, in three dimensions, a random walk in the alcove $\mathcal{D}^3_6$ can be visualized as a walk on a triangular lattice in which the path does not touch the axes $x=y$, $y=z$ or $x = z - 6$ (Fig.~\ref{fig:alcove-walk}).

\paragraph{Value distribution}
We can reduce the calculation of the return distribution to that of counting paths. In this case, there is a known result for the number of random walks $b_{\vec{\eta}\vec{\nu}}$ that start at ${\vec{\eta}~\in~\mathcal{D}^n_m}$ and end at ${\vec{\nu} \in \mathcal{D}^n_m}$ that consist of only positive unit steps and stay in the alcove $\mathcal{D}^n_m$. Specifically, $b_{\vec{\eta}\vec{\nu}}$ was originally given by Filaseta~\cite{Filasetanewmethodsolving1985}, although not described as such, and can be written as~\cite{GrabinerRandomWalkAlcove2002}
\begin{equation}
    \label{eq:weyl-paths}
    b_{\vec{\eta}\vec{\nu}} =\!\! \sum_{\sigma \in S_n} \sum_{\sum j_i = 0} \!\!\! \sign(\sigma) \frac{T!}{\prod_{i=1}^n (m j_i + \nu_{\sigma(i)} - \eta_i)!}
\end{equation}
where $\sigma \in S_n$ is a permutation in the symmetric group of order $n$, and ${T = \sum_i \nu_i - \eta_i}$ is the number of steps. In the case that the policy has no knowledge of state, the probability that a policy starting at ${\vec{\eta} \in \mathcal{D}^n_m}$ is at state ${\vec{\nu} \in \mathcal{D}^n_m}$ after $t$ steps is
\begin{equation*}
    \textstyle \Pr{\vec{s}_T = \vec{\nu}}
    = b_{\vec{\eta}\vec{\nu}} \prod_{i=1}^n \pi(\vec{e}_i \mid \vec{\theta})^{\vec{\nu}_i - \vec{\eta}_i}.
\end{equation*}
Note that this is problematic if the ending state is terminating, $\vec{\nu} \in \mathcal{H}_m^n$, since the number of paths from \eqref{eq:weyl-paths} is zero. We address this by noting that only one state from inside $\mathcal{D}^n_m$ can reach $\vec{\nu}$ in one step (\latin{e.g.}, $x_3 - x_4 = 0$ can only be reached if the last step was $\vec{e}_3$). We can then write the probability that a trajectory terminates at ${\vec{\nu} \in \mathcal{H}^n_m}$, where $\vec{\nu}^{\,\prime} \in \mathcal{D}^n_m$ is its only possible neighboring state, as
\begin{equation}
    \label{eq:nd-term-prob}
        \Pr{\vec{s}_T = \vec{\nu} } = 
    \Pr\{\vec{s}_{T-1} = \vec{\nu}^{\,\prime} \} \pi(a = \vec{\nu} - \vec{\nu}^{\,\prime}\mid\vec{\theta}).
\end{equation}
Extending from gambler's ruin (viewed as the special case $n = 2$), we define a constant reward of $-1$ per step and a bonus reward $\lambda_{\vec{s}}$ for transitioning to a (terminating) state that lies on the hyperplanes that bound the alcove \eqref{eq:hyperplanes}. Thus, the total return for a \textit{particular} trajectory that starts at ${\vec{\eta} \in \mathcal{D}^n_m}$ and terminates at ${\vec{\nu} \in \mathcal{H}^n_m}$ is
\begin{equation*}
g_{\vec{\eta} \vec{\nu}} = \lambda_{\vec{\nu}} - \sum_{i=1}^n \nu_i - \eta_i,
\end{equation*}
and combining this with \eqref{eq:nd-term-prob}, we can write value distribution as follows.
\begin{thm} \label{weyl:return}
Consider the affine Weyl group $\mathcal{D}^n_m$ with hyperplanes $\mathcal{H}^n_m$ and let $\pi_{\vec{\theta}}$ be a policy. Then the episodic return distribution $G$ for an on-policy trajectory satisfies
\begin{equation}
    \Pr{G = g \mid s_0 = \vec{\eta}} = \!\! \sum_{{\vec{\nu} \in \mathcal{H}^n_m}} \!\! \mathds{1}_{g_{\vec{\eta} \vec{\nu}} = g} \Pr{\vec{s}_T = \vec{\nu}}.
\end{equation}
\end{thm}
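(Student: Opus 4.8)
The plan is to prove the identity by a law-of-total-probability argument, partitioning the event $\{G = g\}$ according to the unique terminating state of the trajectory. Everything rests on one structural fact, which I would establish first: because every action is a positive unit step $\vec{e}_i$, the return of a trajectory is determined entirely by its endpoint.

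First I would show that on any trajectory from $\vec{\eta}$ to a terminating state $\vec{\nu} \in \mathcal{H}^n_m$, the return equals the deterministic quantity $g_{\vec{\eta}\vec{\nu}}$. Each action increments exactly one coordinate by $1$, so the total coordinate sum $\sum_i x_i$ increases by $1$ per step; hence the number of steps is forced to be $T = \sum_{i=1}^n (\nu_i - \eta_i)$, regardless of the order in which the coordinates are incremented. Accumulating $-1$ per step over these $T$ steps together with the terminal bonus $\lambda_{\vec{\nu}}$ gives $G = -T + \lambda_{\vec{\nu}} = \lambda_{\vec{\nu}} - \sum_i (\nu_i - \eta_i) = g_{\vec{\eta}\vec{\nu}}$, which depends only on $\vec{\nu}$ once $\vec{\eta}$ is fixed.

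Next I would verify that the events $\{\vec{s}_T = \vec{\nu}\}$, $\vec{\nu} \in \mathcal{H}^n_m$, partition the sample space (conditioned on $s_0 = \vec{\eta}$) up to a null set. They are pairwise disjoint, since a trajectory is absorbed at a single boundary state. For exhaustiveness I would pass to the gap coordinates $d_i = x_i - x_{i+1}$ taken cyclically with $x_{n+1} := x_1 - m$; these are positive integers with $\sum_{i=1}^n d_i = m$, so the interior states correspond to the finitely many compositions of $m$ into $n$ positive parts. An action $\vec{e}_j$ simply moves one unit from gap $d_{j-1}$ to gap $d_j$, and the walk is absorbed the moment some gap reaches $0$. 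Since every interior composition can reach such a boundary (e.g.\ by repeatedly moving units out of a single gap) and the policy places positive mass on each action, this is a finite absorbing Markov chain and absorption occurs with probability $1$.

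Combining these facts, $\{G = g \mid s_0 = \vec{\eta}\}$ is the disjoint union $\bigsqcup_{\vec{\nu} \in \mathcal{H}^n_m :\, g_{\vec{\eta}\vec{\nu}} = g} \{\vec{s}_T = \vec{\nu}\}$, since by the determinism step a trajectory returns $g$ exactly when it terminates at some $\vec{\nu}$ with $g_{\vec{\eta}\vec{\nu}} = g$. Countable additivity then yields $\Pr{G = g \mid s_0 = \vec{\eta}} = \sum_{\vec{\nu} :\, g_{\vec{\eta}\vec{\nu}} = g} \Pr{\vec{s}_T = \vec{\nu}} = \sum_{\vec{\nu} \in \mathcal{H}^n_m} \mathds{1}_{g_{\vec{\eta}\vec{\nu}} = g}\, \Pr{\vec{s}_T = \vec{\nu}}$, with each terminal probability supplied by \eqref{eq:nd-term-prob}. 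The one genuinely substantive step is the determinism of the return; the almost-sure termination is a routine finite-absorbing-chain argument, and the remainder is bookkeeping.
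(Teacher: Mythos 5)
Your proof is correct and follows essentially the same route as the paper, which presents the theorem as the immediate consequence of combining the per-trajectory return $g_{\vec{\eta}\vec{\nu}} = \lambda_{\vec{\nu}} - \sum_{i}(\nu_i - \eta_i)$ with the termination probabilities \eqref{eq:nd-term-prob}: since every action is a positive unit step, the episode length and hence the return are deterministic functions of the terminal state, and partitioning the event $\{G = g\}$ over $\vec{\nu} \in \mathcal{H}^n_m$ yields the stated sum. Your extra verification of almost-sure absorption via the gap coordinates (compositions of $m$ into $n$ positive parts) is a sound detail the paper leaves implicit; the only small remark is that positive mass on \emph{every} action is unnecessary there, since repeatedly taking any single action of positive probability already drains one gap to zero.
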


\paragraph{Gradient distribution} 
Finally, we note that for a policy that cannot observe the state, we can calculate the gradient of the policy value function with respect to the parameters $\vec{\theta}$ from Theorem~\ref{weyl:return}.
\begin{cor}
The distribution of the gradient for a given policy $\vec{\theta}$ on the affine Weyl group is
\[
    \nabla_{\theta} J = \left[\sum_{i=1}^n (\nu_i - \eta_i)\, \nabla \log{}(\pi(a = \vec{e}_i \mid \vec{\theta}))\right]\times G_{\vec{\eta} \vec{\nu}}.
\]
\end{cor}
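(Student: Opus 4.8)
The plan is to obtain this directly from the single-trajectory \textsc{reinforce} estimator of \eqref{eq:grad_j}, specialized to the environment of Theorem~\ref{weyl:return}, mirroring the 1D derivation behind \eqref{eq:gradient-distribution-1D}. First I would fix an on-policy trajectory that starts at $\vec{\eta}$ and terminates at $\vec{\nu} \in \mathcal{H}^n_m$ after $T$ steps, and write its contribution to the gradient as $\nabla_\theta J = G \sum_{t=1}^T \nabla_\theta \log \pi(a_t \mid \vec{\theta})$, where $a_t \in \{\vec{e}_1,\ldots,\vec{e}_n\}$ is the action at step $t$ and $G$ is the trajectory's return.

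The crux is the observation that, because the policy cannot observe the state, each summand $\nabla_\theta \log \pi(a_t \mid \vec{\theta})$ depends only on the chosen direction $a_t$, not on the timestep or the current position. The sum over $t$ therefore collapses into a sum over the $n$ directions, weighted by the number of times each is selected. I would then show this count is pinned down by the endpoints alone: since every action is a positive unit step, the net displacement satisfies $\vec{\nu} - \vec{\eta} = \sum_{t=1}^T a_t$, and reading off coordinate $i$ gives that the number of $\vec{e}_i$-steps is exactly $\nu_i - \eta_i$. This is the higher-dimensional analogue of the split into $N_+$ and $N_-$ in the 1D case, but cleaner: with no backward action, the direction counts are forced, so the decomposition holds path-by-path rather than only in expectation.

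Substituting these counts yields $\nabla_\theta J = G \sum_{i=1}^n (\nu_i - \eta_i)\, \nabla \log \pi(a=\vec{e}_i \mid \vec{\theta})$, and identifying the return $G = G_{\vec{\eta}\vec{\nu}} = \lambda_{\vec{\nu}} - \sum_{i=1}^n (\nu_i - \eta_i)$ from the reward definition preceding Theorem~\ref{weyl:return} produces the stated formula. The distributional content is then inherited for free: the endpoint $\vec{\nu}$ is random with law $\Pr{\vec{s}_T = \vec{\nu}}$ given by Theorem~\ref{weyl:return}, so the gradient's distribution is entirely determined by that of the terminal state. The only step demanding care is the counting argument, and it is immediate from the positivity of the unit steps, so I expect no real obstacle — the work is in confirming that the state-independence of the log-probability factors makes the grouping exact for each individual trajectory.
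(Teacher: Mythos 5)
Your proposal is correct and follows essentially the same route as the paper, which (implicitly, as in its 1D Corollary with $N_+$ and $N_-$) relies on exactly your key observation: state-blindness makes each $\nabla_\theta \log \pi(a_t \mid \vec{\theta})$ depend only on the chosen direction, and since every action is a positive unit step the count of $\vec{e}_i$-steps is forced to be $\nu_i - \eta_i$ by the endpoints alone, so the grouping holds path-by-path. Combining this with the return $G_{\vec{\eta}\vec{\nu}} = \lambda_{\vec{\nu}} - \sum_i (\nu_i - \eta_i)$ and the terminal-state law from Theorem~\ref{weyl:return} is precisely the paper's intended derivation.
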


\section{Related work}
\no
\label{sec:related}
\-\ \ Most existing theoretical results on PG algorithms prove asymptotic guarantees of convergence to a stationary point, or a local maximum, using convexity~\cite{borkarMethodConvergenceStochastic2000, zhangGlobalConvergencePolicy2019}. Recent works also include theoretical proofs of convergence to the global optimum in certain settings, using results from control theory~\cite{fazelGlobalConvergencePolicy2019, bhandariGlobalOptimalityGuarantees2019} and non-convex optimization~\cite{agarwalOptimalityApproximationPolicy2019}. New policy gradient methods have also been proposed with tighter bounds for convergence and sample complexity~\cite{xuSampleEfficientPolicy2020}.

The distributional perspective of RL, where one models the distribution of returns rather than only estimating the expected value, was recently proposed by Bellemare \latin{et.\ al.}~\cite{bellemareDistributionalPerspectiveReinforcement2017,morimuraParametricReturnDensity2012}. Applications of this view to algorithm design have shown great success~\cite{bellemareDistributionalPerspectiveReinforcement2017}. Follow-up work such as quantile and implicit quantile regression~\cite{dabneyDistributionalReinforcementLearning2018,dabneyImplicitQuantileNetworks2018}, exploration algorithms~\cite{tangExplorationDistributionalReinforcement2018}, and distributional versions of actor critic methods~\cite{barth-maronDistributedDistributionalDeterministic2018,tesslerDistributionalPolicyOptimization2019} have further demonstrated the empirical utility of the distributional view in deep RL. Theoretical proofs have been offered for the convergence of distributional algorithms using various methods such as classical stochastic approximation proof techniques~\cite{rowlandAnalysisCategoricalDistributional2018}; and the distributional view has also been used to prove convergence of regular RL algorithms~\cite{lyleComparativeAnalysisExpected2019,amortilaDistributionalAnalysisSamplingBased2020}. 

Identifying effective learning strategies for \textsc{pomdp}s has shown to be difficult, and early works~\cite{SinghLearningStateEstimationPartially1994,pendrithAnalysisDirectReinforcement1998} give reasons why conventional RL is inadequate, particularly citing the lack of desired convergence properties. Several empirically successful approaches used dynamic programming~\cite{bagnellPolicySearchDynamic2004} or reduced the \textsc{pomdp} to an equivalent deterministic model~\cite{ngPEGASUSPolicySearch2013}, but the performance of these methods is restricted by the choice of  policy space. 
Early work by Bartlett and Baxter~\cite{bartlettEstimationApproximationBounds2002,baxterReinforcementLearningPOMDP2000,baxterInfiniteHorizonPolicyGradientEstimation2001} considered the underlying state of the \textsc{pomdp} evolving as a Markov chain, for a fixed policy; they then used this view to develop an algorithm to approximate the performance gradient, with accuracy linked to the mixing time of the MC.

Finally, relatively few works view the learning process itself as a MC, or use this perspective to develop convergence proofs for existing algorithms. Yu establishes the convergence of averaged iterates for fixed step-size algorithms by considering them as weak Feller MCs~\cite{yuWeakConvergenceProperties2017}. Amortila \latin{et.\ al.}\ consider RL algorithms whose update rules only depend on the current state and a sampled transition as inducing MCs on the space of value functions, and develop a general proof strategy that establishes convergence for $TD(\lambda)$, Q-Learning, and other algorithms~\cite{amortilaDistributionalAnalysisSamplingBased2020}.

\section{Conclusion and future work}

In this paper, we built on the idea of viewing the policy gradient learning process as a Markov chain which can be studied by using tools from distributional RL. This approach can be used to gain understanding of the learning process beyond general proofs of convergence, such as explicit probabilities that an agent learns the optimal policy. We employed this novel methodology for the probabilistic analysis of policy gradient algorithms by designing a class of \textsc{pomdp}s and showing analytically how an agent's convergence to the optimal policy changes as a function of the learning rate and initial parameters. We also demonstrate how the distribution of learnable policies is changed by using a different optimizer, e.g. SGD with momentum. All of these analyses are impossible to perform without exact solutions due to the computational intractability of sampling. The \textsc{pomdp}s we derived and analyzed are novel in their own right. 

The concept of \textsc{mdp}s where the value distribution can be computed analytically, rather than from empirical simulations, is new and can potentially benefit the reproducibility of RL algorithms, for example, by making it possible to identify whether or not random seeds were used that are cherry-picked~\cite{hendersonDeepReinforcementLearning2018, bouthillierUnreproducibleResearchReproducible2019}. The introduction of exactly solvable models into RL could provide many benefits, from assisting with algorithm design, to discovering pathological behaviour. We focused on (intuitive) discrete-state problems because of the vast literature on random walks on such structures, but even intuitively simple problems can provide important insight. For instance, Reddi \latin{et.\ al.}~\cite{reddiConvergenceAdam2019} used a very simple model to find that despite abundant empirical evidence, Adam does not always converge~\cite{kingmaAdamMethodStochastic2015}. We hope that simple examples with analytical solutions can provide similar advances in the understanding of RL algorithms. Although we only investigated SGD and SGD with momentum, future work could investigate how other optimizers affect the distribution of learnable policies. Further utility could be provided by extending the gambler's ruin and its n-dimensional solution to \textsc{pomdp}s with continuous states. This would be analogous to the Brownian motion of a particle in n-dimensions. 

Our distributional perspective also allows for the analysis of distributional RL algorithms vs expected RL algorithms. In a version of a 2D gambler's ruin environment  with some state dependence, one could investigate questions like ``is the distributional RL update rule equivalent to the expected RL update rule when non-linear function approximation is used''. More broadly, algorithms of interest could be studied analytically on environments that contain features of interest like catastrophic failures, even in limiting regimes that are too large-scale for simulations.

\section*{Acknowledgements}
\no 
We thank Luc Devroye for assisting us with the literature review on path counting as well as putting up with regular invasions of his office.

\bibliography{references}
\bibliographystyle{abbrv}

\newpage

\medskip

\appendix

\setcounter{equation}{0}
\setcounter{section}{0}
\setcounter{figure}{0}
\setcounter{table}{0}
\setcounter{page}{1}
\makeatletter
\renewcommand{\theequation}{A\arabic{equation}}
\renewcommand{\thefigure}{A\arabic{figure}}

\makeatletter
\newcommand\binomialCoefficient[2]{%
    \c@pgf@counta=#1
    \c@pgf@countb=#2
    %
    \c@pgf@countc=\c@pgf@counta%
    \advance\c@pgf@countc by-\c@pgf@countb%
    \ifnum\c@pgf@countb>\c@pgf@countc%
        \c@pgf@countb=\c@pgf@countc%
    \fi%
    %
    \c@pgf@countc=1
    \c@pgf@countd=0
    \pgfmathloop
        \ifnum\c@pgf@countd<\c@pgf@countb%
        \multiply\c@pgf@countc by\c@pgf@counta%
        \advance\c@pgf@counta by-1%
        \advance\c@pgf@countd by1%
        \divide\c@pgf@countc by\c@pgf@countd%
    \repeatpgfmathloop%
    \the\c@pgf@countc%
}
\makeatother


\section{Gambler's Ruin}
\subsection{Counting Paths with the Method of Images}
\label{app:gamblersmethodofimages}
We seek to determine $\Pr{G=g}$ for our MDP defined in Fig.~\ref*{fig:gamblers} given some policy. Recall that we let $C_{g} = \{(s,t) : s \in \S_\text{terminating}, g = \lambda_{s} - t \}$ be the set of $(s,t)$ pairs where $s$ is a terminating state that can give a return of $g$ and $t$ is the associated time-step at which the state $s$ is reached and the return $g$ is obtained. Note that since our MDP only has two terminating states, for each return $g$, $C_{g}$ can contain at most two elements. Recall that we have 
\begin{equation}\label{app-gamblers:PDF}
    \Pr{G = g} = \sum_{\mathclap{(s,t) \in C_{g}}} \Pr{G = (\lambda_{s} - t)}.
\end{equation}
%
We compute the probability $\Pr{G=(\lambda_s - t)}$ that a path takes $t$ steps and ends at $s_t$ by breaking it down into two parts: the probability that a policy took $r_{s,t}$ steps to the right and $(t - r_{s,t})$ steps to the left, and the total number of paths $a_{s_0,L}(t, r_{s,t})$ that end on $s_t$, after taking $r_{s,t}$ steps to the right, for the MDP with $L$ states and starting state $s_0$. We have
\begin{equation}
\label{gamblers:probability-app}
\Pr{G = (\lambda_s - t)} = (1 - p)^{(t - r_{s,t})} p^{r_{s,t}} \times a_{s_0,L}(t, r_{s, t}).
\end{equation}
%

We are only interested in the moment when an episode ends,\textit{ i.e.} the time-step that the agent first reaches an absorbing state. We note that since it is only possible to return to a state with a period of 2, a state $s_t$ is only reachable if $s_t - s_0 + t \equiv 0 \pmod 2$. Using this, we calculate the total number of steps to the right as
\begin{equation*}
r_{s,t} = \frac{1}{2}(s - s_0 + t).
\end{equation*}

Now for $a_{s_0, L}(t, r_{s,t})$, we describe how to apply the method of images to counting the number of possible paths in environments. Since we are currently counting the number of paths that are alive on state $s$ at time $t$, we do not care about rewards and thus the problem can be reduced to considering the random walk on the gambler's ruin Markov chain with $L$ states and starting state $s_0$. 

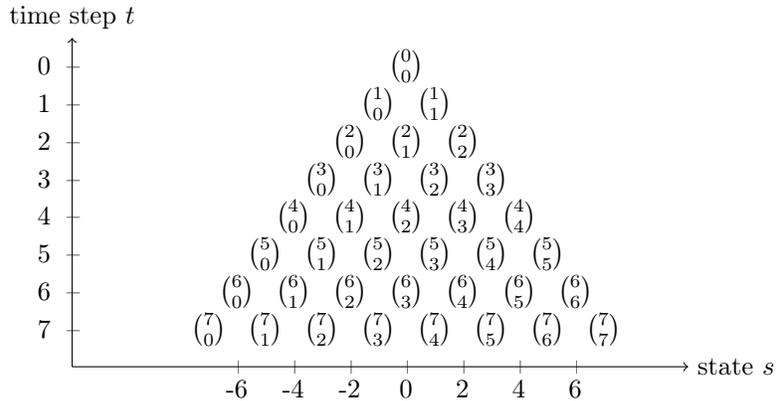
\begin{figure}[htbp]
    \centering
    \def\N{7}
    \begin{tikzpicture}[x=0.75cm,y=0.5cm, 
      pascal node/.style={}, 
      row node/.style={anchor=west, shift=(180: 2.2)},
      another node/.style={anchor=east, shift=(180: -8.6)}]
      \draw[->] (-5.92,-8) -> (-5.92,.75) node[above] {time step $t$};
      \draw[->] (-5.92,-8) -> (5, -8) node[right] {state $s$};
    \path  
    \foreach \n in {0,...,\N} { 
      (-\N/2-1, -\n) node  [row node/.try]{\n \; --}
        \foreach \k in {0,...,\n}{
            (-\n/2+\k,-\n) node [pascal node/.try] {$\binom{\n}{\k}$ }
        }
    }
    \foreach \k in {-3,...,3} {
        (-\N/2+7/2+\k+.05,-\N-1) node [pascal node/.try, rotate=90] {--}
        (-\N/2+7/2+\k, -\N-1.6) node [pascal node/.try] {\pgfmathtruncatemacro\result{2*\k}{\result}}
    }
    ;
    \end{tikzpicture}
    \caption{The number of paths ending on state $s$ at time step $t$, for a random walk on $\Z$ with no absorbing states, where the starting state is $s_0 = 0$. The rows represent the time-steps, and each column represents a state. Note that an empty entry in a row indicates a path count of 0. On time step $t=0$, only state $s_0$ has a trajectory on it and all other states have none (omitted from the figure). On time step $t=1$, only states $s_0 - 1$ and $s_0 + 1$ have trajectories on them.}
    \label{app-fig:pascals}
\end{figure}

Note that to determine $\Pr{G = g}$, we are only interested in the path counts on the absorbing states, but we provide the path counts for any state for generality. 

First, for illustration, consider the most trivial case, the random walk on $\Z$ with no absorbing states. The evolution of the path count for each state as time increases is simply Pascal's triangle. Glancing at Fig.~\ref{app-fig:pascals} one can see that each entry is of the form $\binom{t}{r_{s,t}}$.

The addition of terminating states to the state space, as is the case in the MDPs we design, complicates the calculation. We inspect the effect of such absorbing barriers in the illustration of the path counts with an example: the random walk on the gambler's ruin Markov chain (Fig.~\ref*{fig:gamblers}) with eight states, $L=7$, and a starting state of three, $s_0 = 3$. We will describe how the method of images is performed on this example by using Fig.~\ref{fig:methodOfImagesBarriers}, hoping that this visualization aids understanding. Note that for $t \geq 1$, we can recursively count the paths:
\begin{equation}
\label{app:state-counts}
    a_{s_0, L}(t, r_{s,t}) = \begin{cases}
      a_{s_0, L}(t-1, r_{s,t}) & s=0,1 \\
      a_{s_0, L}(t-1, r_{s,t}-1) & s = L, L-1 \\
      a_{s_0, L}(t-1, r_{s,t}-1) + a_{s_0, L}(t-1, r_{s,t}) & \text{ otherwise}.
   \end{cases}
\end{equation}
Note that in our figures (Fig.~\ref{app-fig:pascals} and Fig.~\ref{fig:methodOfImagesBarriers}), for an entry $(t, r_{s,t})$, the entries $(t-1, r_{s,t})$ and $(t-1, r_{s,t} - 1)$ correspond respectively to the entries at the top right and top left of $(t, r_{s,t})$. Thus, for $1 < s < L-1$, \eqref{app:state-counts} corresponds to the same propagation as in Pascal's triangle, where each entry takes the value of the sum of the two values immediately above it. For the two leftmost and rightmost states, the path count of an entry is the same as the value of the entry at its top right. This \textit{border effect} can be visualized in Fig.~\ref{fig:methodOfImagesBarriers}, and can be analyzed using the method of images, which we illustrate in Fig.~\ref{fig:methodOfImagesBarriers}(right) by plotting 
\[ \mathrm{entry}[s,t] = \begin{cases} 
0 & \text{ if } s = 0, L \\ 
a_{s_0, L}(t, r_{s,t}) & \text{ if } s \in \{1, \dots, L-1\}
\end{cases}\]
instead of the path counts for every state. This is the number of \textit{alive paths} at $(s,t)$, \latin{i.e.} paths that have not yet been absorbed. This value is only different from the true path count on absorbing states, at which any path dies. We can retrieve the path count $a_{s_0, L}(t, r_{s,t})$ for an absorbing state $(0, t)$ or $(L,t)$ to be respectively the value of the entry at its top right $(t-1, 1)$ or top left $(t-1, L-1)$. 



\begin{figure}[h]
    \centering
    \def\N{7}
    \begin{tikzpicture}[x=0.75cm,y=0.5cm, 
      pascal node/.style={}, 
      row node/.style={anchor=west, shift=(180: 1.2)},
      another node/.style={anchor=east, shift=(180: -7.6)}]
      \draw[->] (-3.92,-8) -> (-3.92,.75) node[above] {time step};
      \draw[->] (-3.92,-8) -> (11, -8) node[right] {state};
    \path  
        \foreach \n in {0,...,\N} { 
            (-\N/2, -\n) node  [row node/.try]{\n \; --}
            \foreach \k in {0,...,\n}{
                (-\n/2+\k,-\n) node [pascal node/.try] {
                \global\def\L{7}
                \global\def\s_0{3}
                \pgfkeys{/pgf/fpu}
                \pgfmathparse{
                    binom(\n,\k) 
                        - binom(\n, \k + \s_0)
                        + binom(\n, \k - \L)
                        - binom(\n, \k - \L + \s_0)
                        + binom(\n, \k + \L)
                        - binom(\n, \k + \L + \s_0)
                }
                \pgfmathfloattoint{\pgfmathresult}
                \ifthenelse{\pgfmathresult > -1 }{
                    \let\result\pgfmathresult
                    \pgfmathparse{binom(\n,\k)}
                    \pgfmathfloattoint{\pgfmathresult}
                    \let\pascals\pgfmathresult
                    \pgfmathparse{\result - \pascals}
                    \pgfmathfloattoint{\pgfmathresult}
                    \ifthenelse{0 = 0}{$\pascals$}{
                    $\result^{\color{red}\pgfmathresult}$}
                }{
                }
            }}}
    \foreach \k in {-1,...,2} {
        (-\N/2+8/2+\k+.05,-\N-1) node [pascal node/.try, rotate=90] {--}
        (-\N/2+8/2+\k, -\N-1.6) node [pascal node/.try] {\pgfmathtruncatemacro\result{2*\k + 3}{\result}}
    }
    \foreach \n in {0,...,\N} { 
            \foreach \k in {0,...,\n}{
            (-\n/2+\k,-\n) node [another node/.try] {
            \global\def\L{7}
            \global\def\s_0{3}
            \pgfkeys{/pgf/fpu}
            \pgfmathparse{
                binom(\n,\k) 
                    - binom(\n, \k + \s_0)
                    + binom(\n, \k - \L)
                    - binom(\n, \k - \L + \s_0)
                    + binom(\n, \k + \L)
                    - binom(\n, \k + \L + \s_0)
            }
            \pgfmathfloattoint{\pgfmathresult}
            \ifthenelse{\pgfmathresult > -1 }{
                \let\result\pgfmathresult
                \pgfmathparse{binom(\n,\k)}
                \pgfmathfloattoint{\pgfmathresult}
                \let\pascals\pgfmathresult
                \pgfmathparse{\pascals - \result}
                \pgfmathfloattoint{\pgfmathresult}
                \ifthenelse{\pgfmathresult = 0}{$\result^{\mathclap{\text{\ }}}$}{
                $\result^{\mathclap{\text{\ \ \ \ \ }\color{red}\pgfmathresult}}$}
            }{
            }
        }}}
    \foreach \k in {-1,...,2} {
        (-\N/2+6/2+\k+.05,-\N-0.56) node [another node/.try, rotate=90] {--}
        (-\N/2+3.29+\k, -\N-1.6) node [another node/.try] {\pgfmathtruncatemacro\result{2*\k + 3}{\result}}
    }
    ;
    \end{tikzpicture}
    \caption{This figure is for the underlying gambler's ruin MDP with $L=7$ (see Fig.~\ref*{fig:gamblers}). It provides an intuitive visual understanding of how the method of images acquires the desired result. On the left, we have a truncation of the usual Pascal's triangle. On the right, we have the total number of paths alive ending on each state at time step $t$, with the red exponent representing the difference between the regular Pascal's triangle and the number of paths alive when two absorbing states are introduced.}
    \label{fig:methodOfImagesBarriers}
\end{figure}
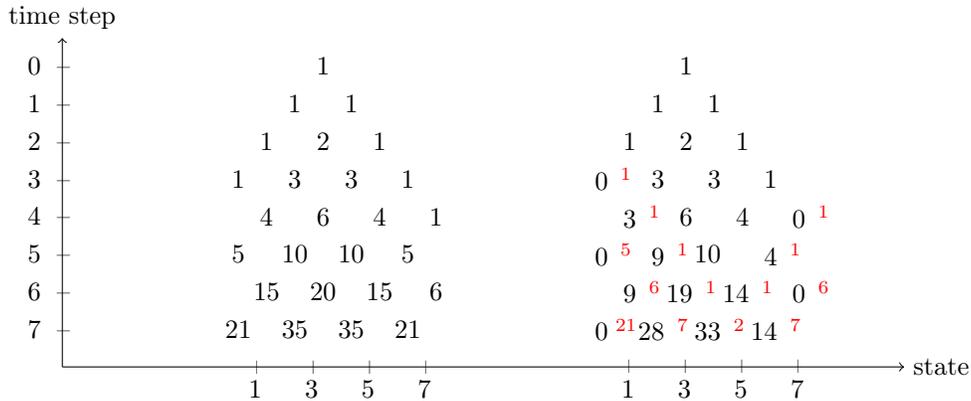

Thus, Fig.~\ref{fig:methodOfImagesBarriers}(right) shows the number of alive paths ending on each state for time $0 \leq t \leq 7$ on the random walk with two absorbing barriers. The red exponent represents the difference between the path count with two absorbing states and the path count with no absorbing states. Inspecting the right half of Fig.~\ref{fig:methodOfImagesBarriers}, we note that the exponents in red for the absorbing states (states $0$ and $7$), map directly to the values in Pascal's triangle displayed at the same coordinate on the left half of the figure. This is the core idea of the method of images. The correct path counts are yielded by creating images of Pascal's triangle that are shifted such that subtracting off the images will constrain the absorbing states to always have $0$ paths alive on them. These negative images need to be cancelled out once they reach an absorbing state that is not the one that they were created to constrain. This is done by reflecting the shifted negative image off the other absorbing state, and creating a positive image. Thus, when the negative image reaches the other absorbing state, an equal in magnitude, but positive image will cancel it out, once more keeping the constraint satisfied. Of course, these positive images must be dealt with similarly and cancelled out when they reach the absorbing state that is not the one they just cancelled out the negative image on. This process of adding and subtracting off images must repeat to infinity. This is why the upcoming formula contains an infinite number of binomials before it is simplified.

Using the method of images as previously described, we begin deriving the general formula for counting the number of paths $a_{s_0,L}(t,r_{s,t})$ in this same gambler's ruin environment with length $L$, starting state $s_0$, and two absorbing states, \textit{for all time steps} $t \geq 0$.
For non-absorbing states $0 < s < L$, this is the number of alive paths
\[
\begin{split}
    a_{s_0,L}(t,r_{s, t}) = \binom{t}{r_{s, t}}  
    &- \left[\underbrace{\binom{t}{r_{s,t} + s_0}}_{\text{image of $s_0$ reflected off absorbing state $0$}} + \underbrace{\binom{t}{r_{s,t} - (L - s_0)}}_{\text{image of $s_0$ reflected off absorbing state $L-1$}}\right] \\
    \qquad \qquad
    & + \left[\underbrace{\binom{t}{r_{s,t} + s_0 + (L - s_0)}}_{\text{image of (image of $s_0$ off st. $0$) off state $L-1$}} \! + \! \underbrace{\binom{t}{r_{s,t} - (L - s_0) - s_0}}_{\text{image of (image of $s_0$ off st. $L-1$) off state $0$}}\right] \\
    \qquad \qquad
    & - \left[\binom{t}{r_{s,t} + 2s_0 + (L - s_0)} + \binom{t}{r_{s,t} - 2(L - s_0) - s_0}\right] \\
    \qquad \qquad 
    & + \left[\binom{t}{r_{s,t} + 2s_0 + 2(L - s_0)} + \binom{t}{r_{s,t} - 2(L - s_0) - 2s_0}\right] \\
    \qquad \qquad 
    & + \dots \\
    = \binom{t}{r_{s,t}} 
    & - \Bigg[\sum_{k=0}^\infty \binom{t}{r_{s,t} + s_0 + (s_0 + (L - s_0))k}  + \binom{t}{r_{s,t} - (L - s_0) - (s_0 + (L - s_0))k} \Bigg] \\
    \qquad \qquad 
    & + \left[\sum_{k=1}^\infty \binom{t}{r_{s,t} + (s_0 + (L - s_0))k} + \binom{t}{r_{s,t} - (s_0 + (L - s_0))k}\right], \\
\end{split}
\]
which we can further simplify by combining $\binom{t}{r_s,t}$ with the last sum to yield
\begin{align*}
    a_{s_0,L}(t,r_{s, t}) &= \sum_{k=-\infty}^\infty \binom{t}{r_{s,t} + (s_0 + (L - s_0))k} - \Bigg[ \sum_{k=0}^\infty \binom{t}{r_{s,t} + s_0 + k s_0 + (L - s_0)k} \\
    & \hspace{5.9cm} + \binom{t}{r_{s,t} -ks_0 +(L - s_0)(-k-1)} \Bigg] \\
    &= \sum_{k=-\infty}^\infty \left[
      \binom{t}{r_{s,t} + (s_0 + (L - s_0))k} 
    - \binom{t}{(r_{s,t} + s_0) + (s_0 + (L - s_0))k} 
    \right].
\end{align*} 
We finally get that for absorbing states $s_t \in \{0, L\}$, 
\begin{align}
    a_{s_0,L}(t,r_{s,t}) &= \sum_{k=-\infty}^\infty \left[ \binom{t'}{r'+kL} - \binom{t'}{(r'+s_0) + kL} \right],  \label{eq:paperformula}
\end{align}
where $t' = t-1$ and $r' = \begin{cases}r_{s,t} & \text{ if } s_t = 0 \\ r_{s,t}-1 & \text{ if } s_t = L
\end{cases}$. 

Further simplifications can be performed to make the infinite sum finite. 
First, we can cut off \eqref{eq:paperformula} for values of $k > t'$, where we have have $kL > t'$ and both binomial coefficients will evaluate to zero: 
\begin{align} \label{eq:cutoffvers}
    a_{s_0, L} (t,r_{s,t}) &= \sum_{k=-\infty}^{t'} \left[ \binom{t'}{r'+kL} - \binom{t'}{(r'+s_0) + kL} \right] \nonumber \\ 
    &= \sum_{k=0}^\infty \left[ \binom{t'}{r'-(k-t')L} - \binom{t'}{(r'+s_0) - (k-t')L} \right].
\end{align}

From here, it is possible to simplify this equation further by noting that $\binom{n}{k}$ is the coefficient of $z^{-1}$ in the Laurent series 
\[\frac{(1+z)^n}{z^{k+1}}.\]
Thus, \eqref{eq:cutoffvers} is the coefficient of $z^{-1}$ in the expansion of the function 
\begin{align*}
    f(z) &\ceq \sum_{k=0}^\infty \left[ \frac{(1+z)^{t'} z^{kL}}{z^{r' + t'L + 1}} - \frac{(1+z)^{t'}z^{kL}}{z^{r'+s_0 + t'L + 1}} \right] = \frac{(1+z)^{t'} (z^{s_0} - 1)}{z^{r' + t'L + s_0 + 1}} \sum_{k=0}^\infty z^{kL}\\ 
    &= \frac{(1+z)^{t'} (z^{s_0} - 1)}{z^{r' + t'L + s_0 + 1} (1+z^L)}
\end{align*}
using the geometric series formula (either as a formal summation, or considering $|z|^L < 1$). 

Now recall that the coefficient of $z^{-1}$ in the expansion of $f(z)$ is the residue of the function at the pole $z=0$, so \eqref{eq:cutoffvers} is $\res(f, 0)$. This happens not to be a simple pole, so we use the residue theorem to relate this to the contour integral of $f(z)$ around a circle of radius $R$ 
\begin{align*}
    - 2\pi i \sum_{p} \res(f, p) = \oint_R f(z) dz,
\end{align*}
where $p$ are all the poles inside the circle of radius $R$. It is standard that as $R \to \infty,$ the integral vanishes since the integrand is of order $O(1/z^2)$. Thus $\res(f,0)$ is the negative of the sum of the other residues. In our case, these are the $L-1$ roots of unity other than 1, i.e. $e^{2\pi i k/L}, \; k = 1, \dots, L-1$. These are all simple poles where the residue is much easier to compute -- indeed, we have that for a function $f(z) = g(z)/h(z)$, at a simple pole $p$ where $h(p) = 0$ and $h'(p) \neq 0$, the residue satisfies
\[\res\Big(f(z) = \frac{g(z)}{h(z)}, p\Big) = \frac{g(p)}{h'(p)} = \frac{(1+p)^{t'} (p^{s_0} - 1) }{ \left[ (r'+t'L + s_0 + 1) - p^L (r'+t'L + s_0 + L + 1) \right] p^{r' + t'L + s_0} }.\]

Using this, \eqref{eq:cutoffvers} becomes
\begin{align*}
    a_{s_0,L}(t,r_{s,t}) &= - \sum_{k=1}^{L-1} \res(f, e^{\frac{2\pi i k}{L}}) = - \sum_{k=1}^{L-1} \frac{\left (1+ e^{\frac{2\pi i k}{L}} \right)^{t'} \left(e^{\frac{2\pi i k}{L}s_0} - 1\right) }{ e^{(1/L)2\pi i k (r'+t'L + s_0)} (-L)  } \\ 
    &= \frac{1}{L} \sum_{k=1}^{L-1} \left (1+ e^{\frac{2\pi i k}{L}} \right)^{t'} \left(e^{\frac{2\pi i k}{L}s_0} - 1\right) e^{- \frac{2\pi i k}{L} (r'+ s_0)} \\ 
    &= \frac{1}{L} \sum_{k=1}^{L-1} e^{\frac{\pi i k}{L} t'} \left( 2 \cos\bpar{\frac{\pi k}{L} } \right)^{t'} e^{-\frac{2\pi i k}{L} r'}\left( 1 - e^{-\frac{2\pi i k}{L} s_0} \right) \\ 
    &= \frac{1}{L} \sum_{k=1}^{L-1} e^{\frac{\pi i k}{L} t'} 2^{t'} \cos^{t'}\bpar{\frac{\pi k}{L} } e^{-\frac{2\pi i k}{L} r'} e^{-\frac{\pi i k}{L} s_0}\left(e^{\frac{\pi i k}{L}s_0} - e^{-\frac{\pi i k}{L}s_0} \right)
\end{align*}
Now, matching the terms in the sum at $k$ and $L-k$ gives us, after some algebra, 
\begin{align}
    a_{s_0, L}(t,r_{s,t}) &= \frac{1}{L}  \sum_{\mathclap{k=1}}^{\floor{(L-1)/2}} \!\! 2^{t'} \cos^{t'}\!\! \bpar{\frac{\pi k}{L} } \left(e^{\frac{\pi i k}{L}s_0} - e^{-\frac{\pi i k}{L}s_0} \right) \! \Big(e^{\frac{\pi i k}{L} t'} e^{\frac{\pi i k}{L} (-2r'-s_0)} \nonumber \\ 
    & \hspace{7.5cm} - e^{-\frac{\pi i k}{L} t' } e^{\frac{\pi i k}{L} (2r'+s_0) } \Big) \nonumber \\ 
    &= \frac{1}{L} \sum_{\mathclap{k=1}}^{\floor{(L-1)/2}} \!\! 2^{t'} \cos^{t'}\!\! \bpar{\frac{\pi k}{L} } \bpar{e^{\frac{\pi i k}{L}s_0} - e^{-\frac{\pi i k}{L}s_0} } \Big(e^{-\frac{i\pi k}{L} (2r'+ s_0 - t')}  \nonumber \\ 
    & \hspace{7.5cm} - e^{\frac{i \pi k}{L} (2r' + s_0 -t') } \Big) \nonumber \\
    &= \frac{4}{L} \sum_{\mathclap{k=1}}^{\floor{(L-1)/2}} \!\! 2^{t'} \cos^{t'}\!\! \bpar{\frac{\pi k}{L} } \sin\bpar{\frac{\pi k}{L}s_0} \sin\bpar{\frac{\pi k}{L}(2r' + s_0 - t')} 
\end{align}

\subsection{Value Function}
\label{app:gamblersValue}
For 1D gambler's ruin, the policy value function, which we denote as $v_{\pi,L,
\lambda}(s)$ can be calculated using a number of ways, but here we start by constructing a recursion relation. Recall that in this environment, the reward for every action is $-1$ unless it causes a transition to a terminating state $s$, in which case it gets a reward of $\lambda_{s}$. For brevity, we define the probability of taking a step to the right as $p = \pi(a = +1 \mid \theta)$ and the expected value at a given state $s$ for our policy as $v_s$. Thus we have
\[
v_s = 
\begin{cases}
\hfill (1 - p)\lambda_0 + p v_{2} - 1 & \text{if }s = 1 \\
\hfill p\lambda_L + (1 - p)v_{L - 2} - 1 & \text{if }s = L - 1 \\
\; \; p v_{s+1} + (1 - p)v_{s - 1} - 1 & \text{otherwise}.
\end{cases}
\]
Rearranging things, we can write
\[
\begin{split}
    v_{1} - p v_{2} &= (1 - p)\lambda_0 - 1 \\
    - (1 - p)v_{L - 2} + v_{L - 1} &= p\lambda_L - 1 \\
    - (1 - p) v_{s - 1} + v_{s} - p v_{s+1}  &= - 1. \\
\end{split}
\]
Considering these $v_s$'s as a vector $\vec{v}$ of size $L+1$, we can write this as the following matrix equation:
\[
\bmqty{
1 & -p \\
-(1-p) & 1 & -p \\
& \ddots & \ddots  & \\
& -(1 - p) & 1 & -p \\
& & -(1 - p) & 1 \\
} \vec{v} = \bmqty{
-1 + (1 - p)\lambda_0\\
-1 \\
\vdots \\
-1 \\
- 1 + p\lambda_L \\
},
\]
where we can note that the matrix on the left hand side is a tridiagonal Toeplitz matrix of the form
\[
J = \bmqty{
  \beta & -\alpha &       0 & \cdots &       0 &       0 \\
-\gamma &   \beta & -\alpha & \cdots &       0 &       0 \\
      0 & -\gamma &   \beta & \cdots &       0 &       0 \\
 \vdots &  \vdots &  \vdots & \ddots &  \vdots &  \vdots \\
      0 &       0 &       0 & \cdots &   \beta & -\alpha \\
      0 &       0 &       0 & \cdots & -\gamma &   \beta \\
}
\]
with $\alpha = p$, $\beta = 1$, and $\gamma = (1 - p)$. Thus here we can use the analytically known inverse given by Encinas and Jim\'{e}nez~\cite{encinasExplicitInverseTridiagonal2018}. As in the main text, writing the $i$-th Chebyshev polynomial of the second kind as $U_i$ and defining
\[
I_{i,j,n}(\alpha, \beta) := 
\alpha^{j - i}
(\alpha\beta)^{\frac{i - j - 1}{2}}
\frac{
    U_i(\frac{1}{2\sqrt{\alpha\beta}})
    U_{n-j-2}(\frac{1}{2\sqrt{\alpha\beta}})
}{
    U_{n - 1}(\frac{1}{2\sqrt{\alpha\beta}})
}
\]
where $i \leq j$, we can write the policy value function as
\begin{equation}
\begin{split}
v_{\pi,L,\lambda}(s) =
&-\sum_{i=0}^{s - 1} I_{i,s - 1,L}(1 - p, p) (1 - (1 - p)\lambda_0 \delta_{i,0})\\
&-\sum_{j=s}^{L - 2} I_{s - 1,j,L}(p, 1 - p) (1 - p\lambda_L \delta_{j,L-2}),
\end{split}
\end{equation}
where $\delta_{ij}$ is the Kronecker delta.

\section{Calculations}
\label{app:calc}
This section contains information about the numerical calculations used to produce the figures in this work. An open source implementation of the algorithms in this paper, primarily written in Rust, is available on github at \url{https://github.com/colin-daniels/lattice_rl} and includes code to reproduce all plots.

\subsection{Overview}
In the analysis of the gambler's ruin environment, we are most concerned about $\mathcal{P}_{\theta, \theta'}$, which tells us how the policy's parameters evolve as it learns. In order to use the analytical results for the value distribution laid out in the main work, we need to determine how $\theta$ is updated based on the value distribution. Our examples for the 1D gambler's ruin use policy gradient and plain stochastic gradient ascent, that is,
\begin{equation}
        \theta' = \theta + \alpha \nabla J(\theta).
\end{equation}

For simplicity, we use the \textsc{reinforce} update rule~\cite{williamsSimpleStatisticalGradientfollowing1992} and a batch size of one. In order to calculate $\mathcal{P}_{\theta, \theta'}$, we also need to specify the policy used. In the one-dimensional case, we only have two actions, and for demonstration purposes we use a Boltzmann policy
\begin{equation}
    \begin{split}
        \pi(a = +1 \mid \theta) &= (1 - \epsilon) \frac{1}{2}(1 + \tanh(\theta / \tau)) + \frac{\epsilon}{2} \\
        \pi(a = -1 \mid \theta) &= 1 - \pi(a = +1 \mid \theta)
    \end{split}
\end{equation}
where $\tau$ is temperature, and $\epsilon$ is an exploration factor. Again for simplicity, we set $\tau = 1$ and $\epsilon = 0$.

The next crucial question is how to represent $\mathcal{P}_{\theta, \theta'}$ such that useful calculations can be carried out. The representation itself depends on whether the calculations are purely analytical in nature, or if, like in the case here, they are numerical. For Fig.~\ref*{fig:distributional-evolution} and Fig.~\ref*{fig:1d-convergence-vs-step-size} we use the simplest and most intuitive form of $\mathcal{P}_{\theta, \theta'}$, a matrix in which individual rows and columns correspond to ranges of $\theta$. Constructing this matrix representation of $\mathcal{P}$ is relatively straightforward and can be done by iterating over the possible gradient values and filling in the appropriate elements of $\mathcal{P}$. We avoid the issue of having an infinite number of possible values by iterating in order from largest probability to smallest, and stopping when the total probability remaining is less than $1\times10^{-12}$. The number of rows and columns in our representation of $\mathcal{P}$ is $1024\times1024$, where discretization error is reduced further by sub-sampling for each bin.

\subsection{Figure \ref*{fig:distributional-evolution}}
\label{app:distributional-evolution}
Once a matrix representation of $\mathcal{P}_{\theta, \theta'}$ is constructed as described above, an initial distribution of $\theta$ can be evolved through time by simply performing a matrix multiplication for each time step. For this figure, the 1D gambler's ruin environment is characterized by $L = 9$, $s_0 = 3$, $\lambda_0 = 0$, and $\lambda_L = 9$. The equivalent SGA step size used is $2\times10^{-4}$. For the plot of SGA with momentum, a momentum value of $0.2$ was used with all other parameters identical to plain SGA. The distribution was created by constructing a 2D transition matrix $\mathcal{P}_{\{\theta,v\}, \{\theta',v'\}}$, which was applied to an initial state with $v = 0$, and for each iteration the distribution was summed over all values of $v$ in order to plot.

\subsection{Figure \ref*{fig:1d-convergence-vs-step-size}}
\label{app:1d-convergence-vs-step-size}
The convergence probabilities shown in Fig.~\ref{fig:1d-convergence-vs-step-size} were calculated by realizing that, in this example, the parameter Markov chain is absorbing. From there, we simply calculate the absorption probabilities, which can be done easily by numerically approximating the infinite power of the matrix representation of $\mathcal{P}_{\theta, \theta'}$. For this figure, the 1D gambler's ruin environment is characterized by $L = 9$, $\lambda_0 = 0$,  $\lambda_L = 9$, and the step size $\alpha$ was iterated over from $10^{-5}$ to $10$.

\small

\end{document}

\end{document}